\def\eqref#1{equation~\ref{#1}}
\def\1{\bm{1}}
\DeclareMathAlphabet{\mathsfit}{\encodingdefault}{\sfdefault}{m}{sl}
\SetMathAlphabet{\mathsfit}{bold}{\encodingdefault}{\sfdefault}{bx}{n}
\def\sB{{\mathbb{B}}}
\newcommand{\R}{\mathbb{R}}
\newcommand{\lr}{\alpha}
\def\imagenet{ImageNet}
\newenvironment{proof}{\paragraph{Proof:}}{$\blacksquare$} 
\def\batch{\sB}
\def\lr{\lambda}
\def\weights{\theta}
\def\sampledist{{\mathcal D}}
\def\sampledfrom{\sim}
\def\normalstd{\mathcal{N}(0, \mathbbm{1})}
\newtheorem{lemma}{Lemma}
\newcommand{\normal}[1]{\mathcal{N}(0, \mathbbm{#1})}
\newcommand{\norm}[1]{\|#1\|}
\newcommand{\norms}[1]{\|#1\|^2}
\definecolor{NavyBlue}{RGB}{0, 110, 184}
\definecolor{VioletBlue}{RGB}{120, 85, 170}
\definecolor{BrickRed}{RGB}{182, 50, 28}
\definecolor{Blue}{RGB}{0, 0, 184}
\newcommand{\pd}[2]{\frac{\partial #1}{\partial #2}}
\newcommand{\internal}[1]{}
\newcommand{\Ex}[1]{E\left[#1\right]}
\newcommand{\Exover}[2]{E_{#1}\left[#2\right]}
\newcommand{\Exoversamples}[1]{\Exover{x \sampledfrom \sampledist}{#1}}
\newcommand{\mat}[1]{\mathrm{#1}}
\def\covariance{\mat{C}}
\def\normaln{\mathcal{N}}
\def\lmat{\mat{A}}
\def\coord{\mat{Q}}
\def\id{\mat{I}}
\title{Training trajectories, mini-batch losses and the curious role of the learning rate.}
\author{Mark Sandler \and Andrey Zhmoginov \and Max Vladymyrov \and Nolan Miller 
\\ 
Google Research \\
{\small \texttt{\{sandler,azhmogin,mxv,namiller\}@google.com}}}
\date{}
\begin{document}

\maketitle
\begin{abstract}
Stochastic gradient descent plays a fundamental role in nearly all applications of deep learning. 
However its ability to converge to a global minimum remains shrouded in mystery. In this paper we propose to study the behavior of the loss function on fixed mini-batches along SGD trajectories. We show that the loss function on a fixed batch appears to be remarkably convex-like. In particular for ResNet the loss for any fixed mini-batch can be accurately modeled by a quadratic function and a very low loss value can be reached in just one step of gradient descent with sufficiently large learning rate. We propose a simple model that allows to analyze the relationship between the gradients of stochastic mini-batches and the full batch. Our analysis allows us to discover the equivalency between iterate aggregates and specific learning rate schedules. In particular, for Exponential Moving Average (EMA) and Stochastic Weight Averaging we show that our proposed model matches the observed training trajectories on ImageNet. Our theoretical model predicts that an even simpler averaging technique,  averaging just two points a many steps apart, significantly improves accuracy compared to the baseline. We validated our findings on ImageNet and other datasets using ResNet architecture. 
\end{abstract}

\vspace{-.3cm}
\section{Introduction}
Stochastic Gradient Descent~(SGD; \citealp{robbins-monro}) played a fundamental role in the rise and success of deep learning. Similarly, the learning rate, the multiplier controlling the magnitude of the weight update, is perhaps the most crucial hyper-parameter that controls the training trajectory \citep{goodfellow2016deep,Smith2015-jg,Bengio2012-oh,Smith2015-jg}. Despite the advances in stochastic optimization methods~\citep{Kingma2014-vr} designed to reduce the need for tuning, and in particular dependence on learning rate, novel learning rate schedules  such as cosine \citep{Loshchilov2016-ig}  and cyclical learning rate schedule \citep{Smith2015-jg} continue to be the subject of active research and are far from being rigorously understood. On the other hand, the impact of stochasticity of the optimization process caused by changing input mini-batches also known as  "batch noise", \citep{Ziyin2021-mj} has been relatively poorly understood. In

\begin{figure}[t]
    \centering
    \includegraphics[width=0.45\textwidth]{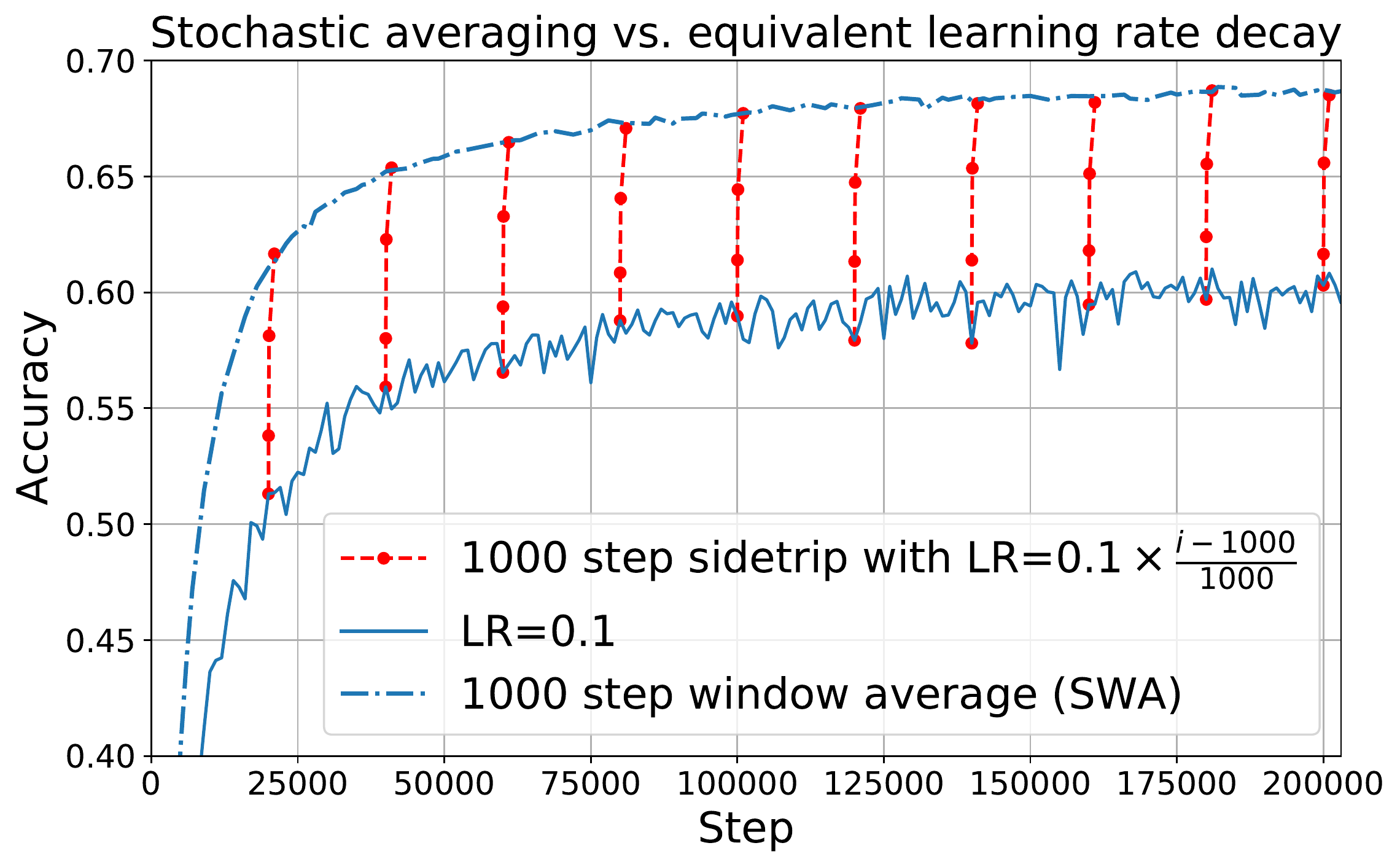}
    \caption{Weight averaging v.s. equivalent learning rate schedule. The dotted vertical red lines show the set of  independent trajectories ``sidetrips'' with appropriate learning rate schedule that start at corresponding point in the main trajectory. }
    \label{fig:momentary-vs-others-top}
 \end{figure}
 
\vspace{-.2cm}
\paragraph{Contributions} In this paper we show that the loss as a function of learning rate  of the training batch is dramatically different than that
on the held-out batch, when measured on SGD trajectory.  This connection to learning rate leads to a simple theoretical model describing observed results. We believe this connection of the learning rate to the loss as a function of held out and training match has been largely overlooked in the literature. 

One practical result that we demonstrate that various popular averaging techniques have equivalent learning rate schedules. To the best of our knowledge such connection has not been observed in the literature. We demonstrate the equivalency in two different scenarios: a simple scenario where we assume that the gradient for the same input changes little between steps, and the other where we show that this equivalency holds in the limit case of an arbitrary long trajectory. We demonstrate (e.g. \cref{fig:momentary-vs-others-top}) that these results hold empirically for a general training of \imagenet~\citep{imagenet} using ResNet architecture~\citep{resnets}.

We propose a simple model that captures these behaviors. The model we consider is similar to that of~\cite{Mandt2017-zy}, but our analysis is noticeably simpler.  We further show that our simple model is suitable for studying training trajectories with a spectrum of time scales ranging from fast to slow and corresponding to long trenches in the loss landscape. We show how model weights quickly converge towards a quasi-stationary distribution in some degrees of freedom, while slowly drifting towards the global minimum in others and derive the expression for the auto-correlation of the averaged trajectory as a function of the averaging kernel (\cref{app:weight-averaging}).

Using this model we develop a simple geometric explanation of why averaging improves the accuracy in the first place. We show that even on large datasets such as \imagenet\ and for deep architectures using the fixed learning rate the trajectory traverses an ellipsoid centered around the target minimum and the averaging moves the trajectory to the inside of that ellipsoid. Further, we show that for two independently trained trajectories that share a common burn-in period the \emph{distance} in parameter space along one trajectory to the final solution of another monotonically decreases as training progresses and asymptotically stabilizes at a fixed value. This suggests that most of the probability mass of the training outcomes is  concentrated on a sphere-like volume. 

Our final insight is the dramatic difference in the behavior of the loss function for the batch that it is being optimized v.s.\ a new (training) batch. We are not aware of any study in the literature that explores this difference. While this is reminiscent of the generalization gap, there is a crucial distinction: we analyze the loss on a different \emph{training} batch, and not on a validation batch. To some degree our work also illuminates the approach of Amortized Proximal Optimization~\citep{Bae2022-tv} that used the loss on the unseen batch referred as \emph{Functional Space Discrepancy}, as a negative regularization force for learning rate selection. Curiously their motivation was to reduce the change of the loss on the unseen batch as a way to minimize the predictions on unseen data. However, as we demonstrate here, such type of adaptive schedule tries to apply asymptotically {\em largest} update, that prevents loss on unseen batches to \emph{increase} and thus the training to diverge.  

Interestingly, such difference in gradients and behavior of two training batches, also indicates that a simple intuition \citep{Ziyin2021-mj, Mandt2017-zy,langevin-lb} that stochastic gradient descent is an approximation to batch gradient could be misleading and thus should be used judiciously, despite it being unbiased estimate of the full gradient. 



\paragraph{Paper organization}
The paper is organized as follows. In the next section we describe some of the \imagenet{} experiments that led us to our theoretical model described in section \cref{sec:model}. In ~\cref{sec:experiments} we show some additional experiments on both real data and synthetic model that we introduced in \cref{sec:model}. Finally in~\cref{sec:conclusions} we outline some of the open questions and future directions.

\section{Related work}
Averaging intermediate SGD iterates in the weight space can be traced back to~\citet{ruppert-averaging}.  In more recent years
it become a common tool for speeding up convergence \citep{kaddour22} and improving accuracy \citep{Izmailov2018-wv}.
It was first directly analyzed in~\citet{Polyak1992-lk} and has been studied in the literature since. For example in~\citet{Mandt2017-zy} the connection between SGD and Bayesian inference was established. In particular they demonstrated that SGD training that initialized within common zone of attraction can be seen as Markov chain with Gaussian stationary distribution for constant learning rate, which is related to our observation about trajectory stabilizing at a fixed distance from the global minimum. However in present work we mainly analyze the behavior of a {\emph single trajectory}. This has immediate advantage that it allows us to reason about {\em finite} trajectory lengths, as well as the connection between learning rate schedules and averaging.  

Another related line of inquiry is \emph{Stochastic Weight Averaging} (SWA) by~\citet{Izmailov2018-wv}, where a trajectory with cyclic learning rates and averaging every $c$ steps is usedas a way to improve accuracy. This in theory is slightly different from Ruppert-Polyak averaging, since not every point on the trajectory is averaged, however, many of the experiments presented in that paper use $c=1$ and we suspect this difference is insignificant.  In that work it was also observed that averaging between trajectory allows to move inside the surface, but no further discussion  has followed.  Additionally, the connection between SWA and generalization was further explored in~\citet{He2019-vx}, where it was showed that averaging introduces bias towards flatter part of the basin and this results in better generalization.  In contrast here our goal is to illuminate the mechanisms by which SWA changes the behavior of empirical loss in the context of {\em stochastic} (rather than full) gradient descent.  Connecting current work and the generalization performance is an interesting open direction. The quadratic theoretical model we consider is reminiscent of that
in~\citet{Schaul_undated-uj}, but our analysis is both simpler than former and more general
than latter. We note that analysis only applies to {\em stochastic} gradient descent. In case of full gradient descent there have been several recent works showing that quadratic approximation model might be toosimplistic~\citep{Ma2022-ft, Damian2022-jx, Cohen21}.


Recent work on basins and local minima connectivity~\citep{merging-models,random-initializations, linear-mode-connectivity} have demonstrated that there is a lot of apparent structure in the loss space, in 
particular it was shown that different initialization lead to different equivalence classes, which can then be mapped between each other using simple permutation of the neurons. That work is complementary to ours in that they show that there are simple ways of moving basins, while we concentrate on properties of a single basin. 


\section{Empirical observations}
\label{sec:empirical}
\begin{figure*}[t]
      \centering
      \includegraphics[width=0.85\textwidth]{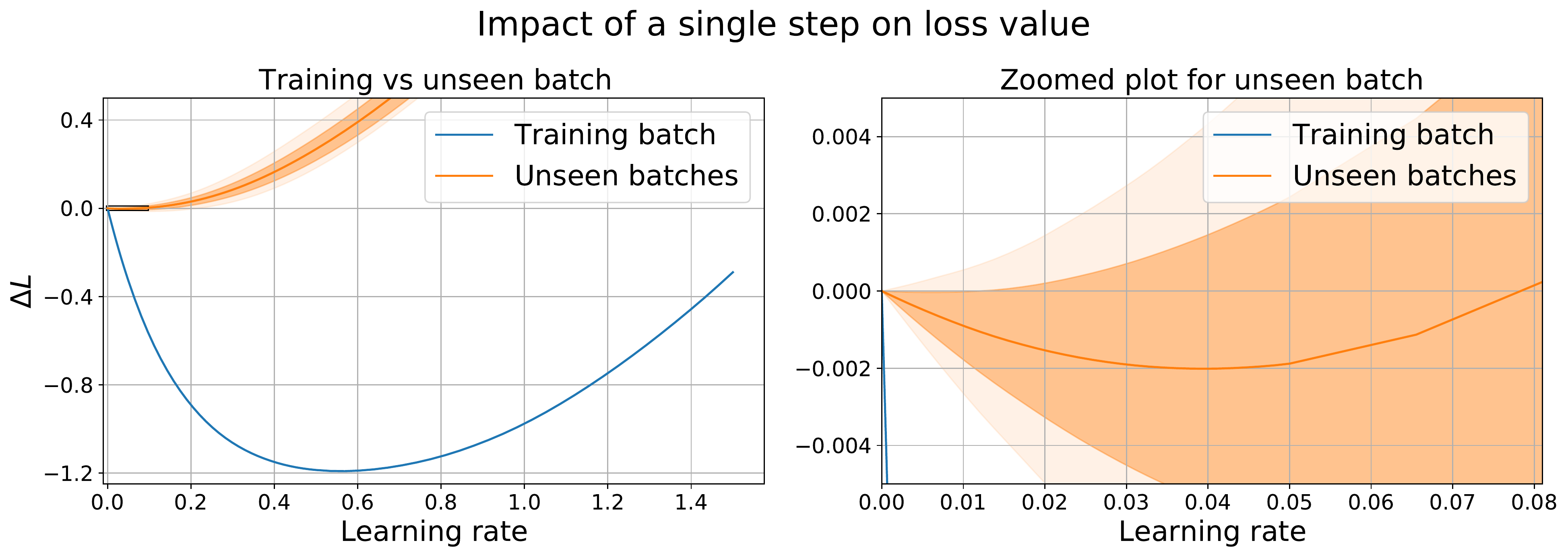}
      \caption{Loss as a function of a step size in a single step gradient descent in the middle of a trajectory. The step shown here is step $75\,000$ out of $100\,000$ run. The behavior is typical for other steps as well. The small rectangle on the left graph
      approximately shows the location of the zoomed-in right graph. The unseen batches is computed over $10$ batches, 
      with dark shade showing the standard deviation from the mean, while light shade shows the max/min values observed.}
      \label{fig:single-step}
\end{figure*}
\begin{figure*}[t]
      \centering
      \includegraphics[width=0.98\textwidth]{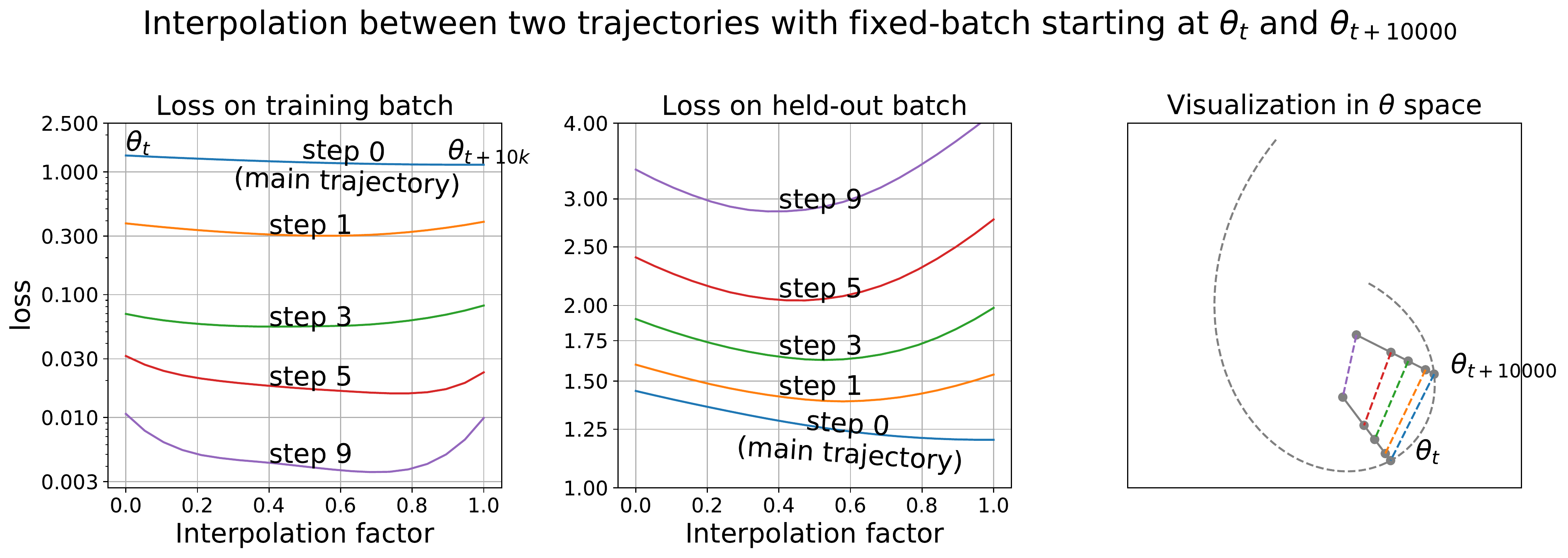}
      \caption{Loss for a fixed batch on the interpolation between two points of the original training trajectory. We use two points along the \imagenet\ training trajectory with $t=75\,000$, and $t=85\,000$.  
      From each point we perform 9 steps of gradient descent on a fixed batch and measure the loss on interpolation between corresponding pairs of points of each trajectory. The left graph shows the loss on the training batch, and note how within just 3 steps it reaches nearly 0. The middle graph shows the loss on held out. At step 0 the training and held out batch loss profiles are very similar. Rightmost graph shows the visualization of the process. }
      \label{fig:interpolation-of-fixed-batch-sidetrip}
\end{figure*}
In this section our goal is to explore some of the properties of SGD that will lead us to the theoretical model of \cref{sec:model}.
\paragraph{Experimental setup} For all our experiments with non-synthettic data 
we use ResNet34~\citep{resnets} architecture and \imagenet~\citep{imagenet}. 
We use standard SGD with momentum $0.9$, that produces good accuracy on \imagenet. 
Most of our experiments on real data involve starting with partially trained architectures, and running a separate \emph{side-trip} with different learning rate characteristics and/or
with a fixed \emph{training} mini-batch to demonstrate certain behaviors. We also will use the notion of \emph{held-out} mini-batch which is simply a \emph{fixed} batch different from the training mini-batch. 
Throughout this section we will be referring to the training trajectory as the \emph{main} trajectory to differentiate from side-trips. 

\vspace{-.2cm}
\paragraph{Single step experiments}
We begin with a simple question: how does a loss landscape look for a single batch
applied with a varying learning rate? Specifically, does the loss behave on
that batch v.s.\ a separate held-out batch? In~\cref{fig:single-step} we
plot the loss v.s.\ the learning rate when applied to a single step
using either the batch for which the direction was computed or a separate held-out
batch. Here we used step 75K, out of 100K,  and batch size 512, 
however very similar results hold elsewhere in the trajectory and for different batch sizes. See~\cref{fig:single-step-batches} for details. 

As can be seen from~\cref{fig:single-step} the loss for both curves is smooth and can be 
well approximated by a low-degree polynomial. More importantly. the loss on the
training batch reaches loss value below the average loss of a fully trained model
in a single step. Typical loss value for fully trained \imagenet{} is close to
$1$, while a single step starting from the loss value $1.8$ results in a 
final loss of $0.5$. The loss on the held-out batch behaves very differently: 
only sufficiently small learning rate leads to any improvement to the loss on the full distribution. 
Therefore any analytical model  purporting to describe SGD trajectories  should have this property.
Remarkably this suggests that full gradient trajectory is not a good approximation of the SGD trajectory,
if we want to understan the dynamics of learning rate selection. 

\vspace{-.2cm}
\paragraph{Multi-step side-trip} As we just observed, a single step along the training batch direction can bring the loss down to values considerably lower than the best average loss of a fully trained model. Naturally one can ask -- does the loss basin for this specific batch remains the same regardless of the starting point? In other words, if we apply gradient descent with a fixed batch   at one point of the main trajectory, would it be the same minima basin as if started from different point of the main trajectory? Note: this question is different from the notion of the global basin connectivity explored in~\citet{linear-mode-connectivity}. Instead, we look at the behavior of the loss for a fixed mini-batch as we branch off different parts of the training trajectory.  Following the analysis of~\citet{linear-mode-connectivity} we use interpolation to check that we are in
the same basin. In~\cref{fig:interpolation-of-fixed-batch-sidetrip}  we show that the side-trip using the same fixed batch along the main training trajectory leads to the same basin. Remarkably, the held-out batch loss while increasing dramatically also stays in the same basin as the main trajectory. 
\vspace{-.2cm}
\section{Analytical model}

\begin{figure*}[t]
      \centering
              \begin{subfigure}{0.4\textwidth}
    \includegraphics[width=0.98\textwidth]{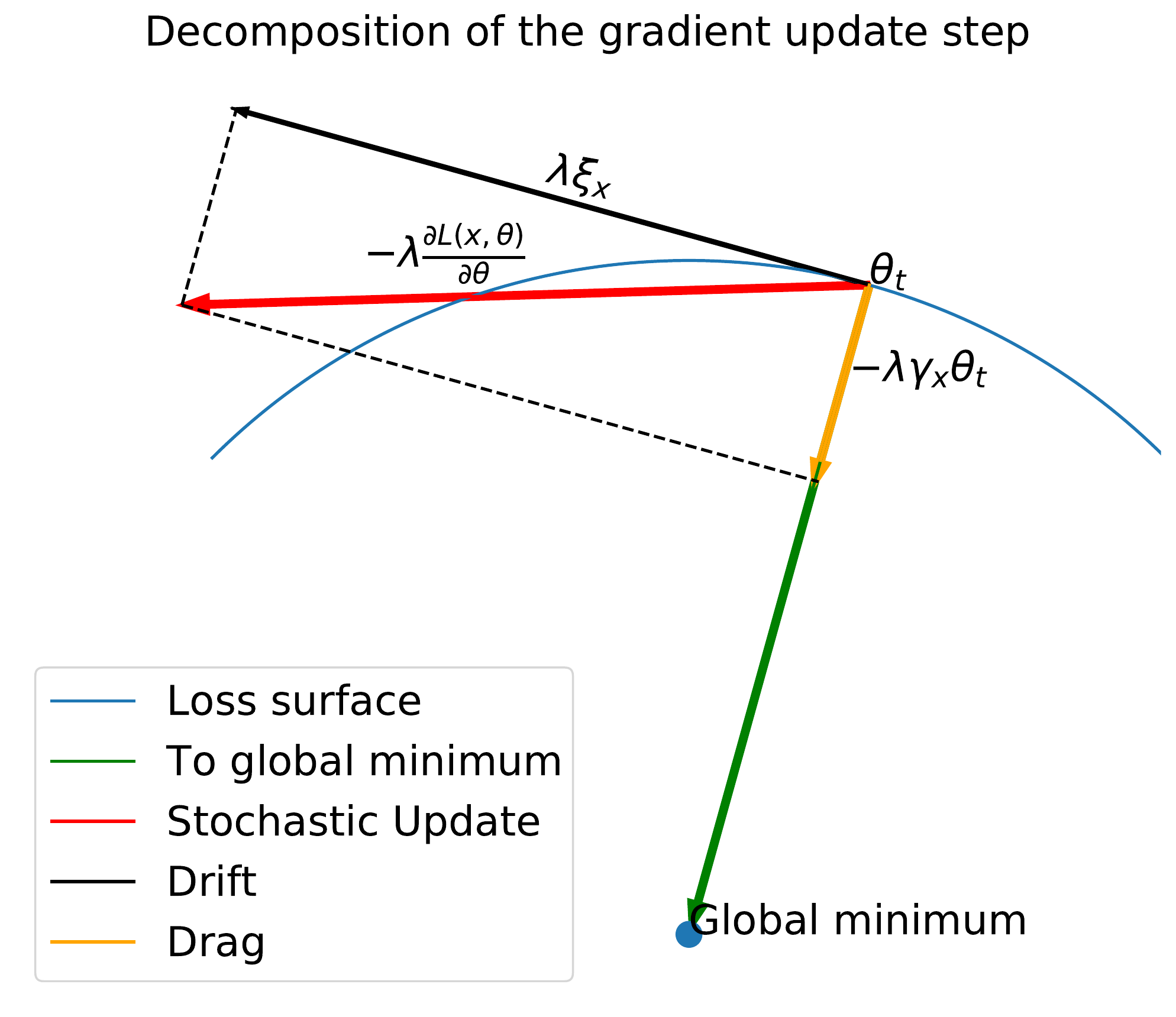}
    \caption{}
          \label{fig:drag-drift-interp-2d}
    \end{subfigure}
    \begin{subfigure}{0.35\textwidth}
    \includegraphics[width=0.98\textwidth]{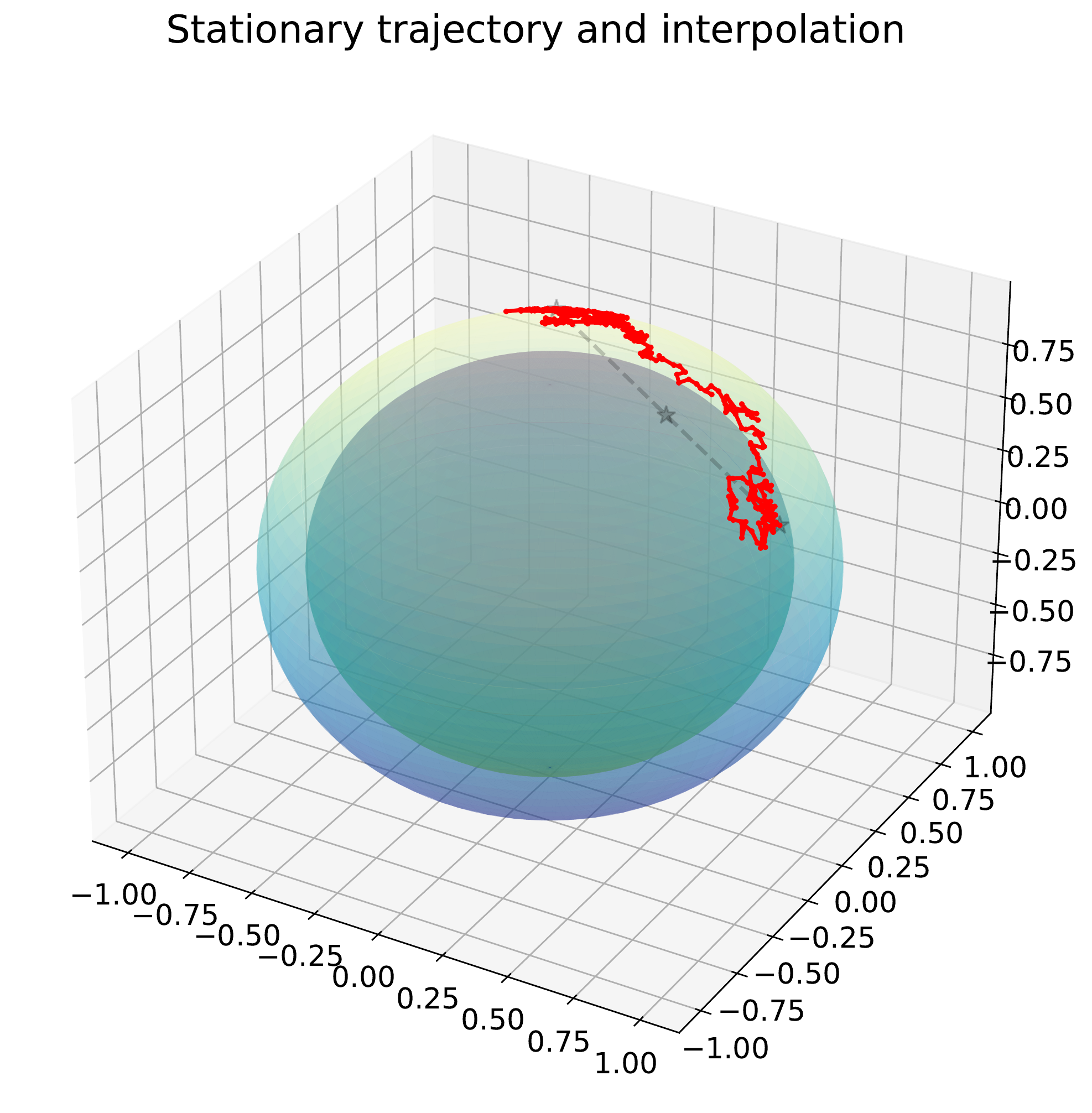}
    \caption{}
    \label{fig:drag-drift-interp-3d}
    \end{subfigure}
      \caption{Stochastic gradient descent. \Cref{fig:drag-drift-interp-2d} shows how stochastic gradient direction decomposes into a drift and a drag component. The magnitude of the drag component changes as $\sqrt{\lambda}$, while
      the magnitude of the drift component is proportional to $\lambda$. Further, the drift component has a quadratic bias to increase loss, thus enabling us to pick learning rate that allows to reduce the loss.  \Cref{fig:drag-drift-interp-3d}  illustrates how stationary trajectory traverses the sphere of a fixed loss, while taking a midpoint enables to get \emph{inside} the sphere. Best viewed in color.}
\end{figure*}
\label{sec:model}
In this section we propose a model describing the behavior of the high dimensional weight vector $\theta$, as it moves along the SGD trajectory, in such a way that it can 
explaining the phenomena that we described in \cref{sec:empirical}: the loss for the individual batches behaves like a low-degree polynomial and reaches remarkably low value in a single step. Thus we can use quadratic function that models the loss function for individual samples: 
$$
 L(\weights, x) = \norms{A_x \weights + c_x} / 2,
$$
where $A_x$ is a matrix and $c_x$ is a vector describing the loss function for a random sample $x \sampledfrom D$.  This model is similar to the one proposed in \citet{Schaul_undated-uj}, however it analyzed only the special case of a constant and diagonal $A_x$, while we consider a general scenario.  Following the mini-batch gradient where individual batches are sampled from some distribution we then will be moving in a  direction whose expectation is $\Exoversamples{\frac{\partial{L(\weights, x)}}{\partial\weights}}=\frac{\partial{\Exoversamples{L(\weights, x)}}}{\partial\weights}.$
Without loss of generality we will assume that $\frac{\partial{\Exoversamples{L(0, x)}}}{\partial\weights} = 0$.


The global loss over the sample distribution $\sampledist$ is 
\begin{align}
    L(\weights) & = \Exoversamples{ L(\weights, x)} \nonumber\\
    & = \theta^T \Exoversamples{ A_x^T A_x} \theta + \Exoversamples{c_x^TA_x}\theta + \nonumber \\
    &\phantom{=}+ \Exoversamples{c_x^T c_x}  \nonumber \\
    & =  \theta^T \Exoversamples{ A_x^T A_x} \theta + \Exoversamples{c_x^T c_x}
\end{align}
where we take expectation over all possible samples. The last transition holds since we assumed that the loss function achieves minimum at $0$, and thus $A_x$ and $c_x$ satisfy $\Ex{A^T_x c_x} = 0$. 

The stochastic gradient and the full gradient can then be written respectively as: 
\begin{equation}
\pd{L(\weights,x)}{\weights} = { A^T_x (A_x \weights + c_x)}.
\end{equation}
\begin{align}
\pd{L(\weights)}{\weights} & = \pd{\Exover{x \sampledfrom \sampledist}{\norms{A_x \weights + c_x} / 2}}{\weights} \\
&=
\Exover{x \sampledfrom \sampledist}{ A^T_x (A_x \weights + c_x)}.   
\end{align}

Now we estimate how close we can get to $0$ for a fixed learning rate. Note, even though the individual steps gradients can be assumed
to be unbiased estimators of global loss gradient, there is generally no guarantee that trajectory will converge to a minimum. Instead
we can show that the trajectory will be {\em traversing} an ellipsoid around the minimum. 

\begin{lemma}
If $\Exoversamples{A^T_xA_x} = I$ the trajectory will stabilize at an ellipsoid of a fixed size proportional to the square root of the learning rate $\sqrt{\lambda}$. 
\end{lemma}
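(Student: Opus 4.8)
The plan is to track the first two moments of the iterate $\theta_t$ along the SGD recursion and show that, after a geometrically decaying transient, the mean sits at the minimum while the second moment settles at a matrix of size $\Theta(\lambda)$; the typical displacement from the minimum is then $\Theta(\sqrt{\lambda})$, which is exactly the asserted ellipsoid.

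First I would make the update explicit. With batches $x_t$ drawn i.i.d.\ from $\sampledist$, the step $\theta_{t+1}=\theta_t-\lambda\,\pd{L(\theta_t,x_t)}{\theta}$ becomes
$$\theta_{t+1}=(I-\lambda B_t)\theta_t-\lambda g_t,\qquad B_t:=A_{x_t}^T A_{x_t},\quad g_t:=A_{x_t}^T c_{x_t}.$$
Two facts drive the argument: the hypothesis gives $\mathbb{E}[B_t]=I$, and the minimum-at-zero assumption $\mathbb{E}[A_x^T c_x]=0$ gives $\mathbb{E}[g_t]=0$. Since $x_t$ is independent of $\theta_t$, all expectations factor. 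Taking expectations of the update yields $\mathbb{E}[\theta_{t+1}]=(1-\lambda)\mathbb{E}[\theta_t]$, hence $\mathbb{E}[\theta_t]=(1-\lambda)^t\theta_0\to 0$ for $0<\lambda<2$: in expectation the iterate is pulled to the minimum, so the ellipsoid is centered there and any residual mean decays geometrically.

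Next I would form the second moment $\Sigma_t:=\mathbb{E}[\theta_t\theta_t^T]$. Outer-multiplying the update and using independence of $x_t$ from $\theta_t$ together with $\mathbb{E}[g_t]=0$, the cross terms are proportional to $\mathbb{E}[\theta_t]$ and hence decay; what survives is the affine recursion
$$\Sigma_{t+1}=(1-2\lambda)\Sigma_t+\lambda^2\,\mathbb{E}[B_t\Sigma_t B_t]+\lambda^2 G,\qquad G:=\mathbb{E}[g_t g_t^T].$$
A stationary $\Sigma_\infty$ therefore solves $2\Sigma_\infty-\lambda\,\mathbb{E}[B\Sigma_\infty B]=\lambda G$. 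Solving order-by-order in $\lambda$ gives $\Sigma_\infty=\tfrac{\lambda}{2}G+O(\lambda^2)$, so the stationary covariance is $\Theta(\lambda)$ and the iterate concentrates on the ellipsoid $\{\theta:\theta^T\Sigma_\infty^{-1}\theta\approx d\}$ whose semi-axes scale as $\sqrt{\lambda}$, with axes set by the eigenvectors of $G=\mathbb{E}[A_x^T c_x c_x^T A_x]$.

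The main obstacle is to justify that the covariance recursion actually converges, rather than merely positing a fixed point. The cleanest rigorous route for the size claim is a scalar Lyapunov argument on $\Tr(\Sigma_t)=\mathbb{E}[\norms{\theta_t}]$: taking the trace of the recursion and using $\Tr(\mathbb{E}[B\Sigma B])=\Tr(\mathbb{E}[B^2]\Sigma)\le\norm{\mathbb{E}[B^2]}\,\Tr(\Sigma)$ for $\Sigma\succeq 0$ gives
$$\Tr(\Sigma_{t+1})\le\big(1-2\lambda+\lambda^2\norm{\mathbb{E}[B^2]}\big)\,\Tr(\Sigma_t)+\lambda^2\,\Tr(G).$$
This is a contraction precisely when $\lambda<2/\norm{\mathbb{E}[(A_x^T A_x)^2]}$, and its fixed point is $\Tr(\Sigma_\infty)=\lambda\,\Tr(G)/(2-\lambda\norm{\mathbb{E}[B^2]})=\Theta(\lambda)$, confirming $\norm{\theta}=\Theta(\sqrt{\lambda})$. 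The same fourth-moment control is what makes the matrix map $\Sigma\mapsto(1-2\lambda)\Sigma+\lambda^2\mathbb{E}[B\Sigma B]$ positivity-preserving and strictly contractive, which both pins down $\Sigma_\infty$ uniquely and legitimizes the earlier step of neglecting the transient cross terms. The delicate part throughout is bounding the fourth-moment term $\mathbb{E}[B\Sigma B]$ and thereby the admissible learning-rate range.
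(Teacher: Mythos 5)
Your proof is correct, and it takes a genuinely different route from the paper's. The paper argues geometrically on the scalar $\norms{\weights_t}$: it splits each stochastic step into a drag component along $\weights_t$, with coefficient $\gamma_x = \weights_t^T A_x^T A_x \weights_t/\norms{\weights_t}$, plus an orthogonal drift $\xi_x$, computes the one-step change $\norms{\weights_{t+1}}-\norms{\weights_t} = -\lr\gamma_x(2-\lr\gamma_x)\norms{\weights_t}+\lr^2\norms{\xi_x}$, and then solves for the radius at which the \emph{expected} change vanishes, obtaining $\norm{\weights}=\norm{c}\sqrt{\lr/2}$ to leading order. That argument identifies the stationary radius but never proves the trajectory converges to it, and it leans on informal steps ($b_x$ taken orthogonal to $\weights_t$ ``by independence'', and $\Exoversamples{\norms{\xi_x}\mid \weights_t}\approx\Exoversamples{\norms{\xi_x}}$). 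You instead run full moment recursions: the mean contracts as $(1-\lr)^t$, which pins the ellipsoid's center at the minimum; the covariance obeys an affine map whose fixed point $\Sigma_\infty=\tfrac{\lr}{2}G+O(\lr^2)$ reproduces exactly the paper's leading-order answer, since $\Tr(G)=\Exoversamples{\norms{A_x^Tc_x}}$ is what the paper loosely writes as $\norms{c}$; and your trace-Lyapunov contraction supplies precisely what the paper omits, an actual convergence argument with an explicit admissible range $\lr<2/\norm{E[(A_x^TA_x)^2]}$, at the price of fourth-moment control. Your route also yields the \emph{shape} of the ellipsoid (the eigenstructure of $G$), not just its scale. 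One small repair: your trace inequality only upper-bounds the stationary trace, so by itself it shows $O(\lr)$, not $\Theta(\lr)$; pair it with the trivial lower bound obtained from $E[B\Sigma B]\succeq 0$, namely $\Tr(\Sigma_{t+1})\ge(1-2\lr)\Tr(\Sigma_t)+\lr^2\Tr(G)$, whose fixed point is $\lr\Tr(G)/2$. (Your phrase ``concentrates on the ellipsoid'' also tacitly appeals to high-dimensional concentration, but the paper is equally informal on that point, via its Gaussian-shell remark.)
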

\begin{proof}

Consider a fixed sample $x$ at step $t$. Let $b_x := c_x^T A$. 
\begin{equation}
\label{eq:weight-update}
\weights_{t+1} = \weights_t  - \lr A^T_x A_x \weights_{t} - \lr b_x.
\end{equation}
We would like to describe the behavior of $\norm{\weights_{t+1}} - \norm{\weights_t}$  as a function of $\lr$.
First, we observe that  $\lr \weights^T_tA^T_x A_x \weights_t \ge 0$, with equality achieved only when $A_x\weights_t = 0$. Thus, the second term in \cref{eq:weight-update} can be decomposed into a projection onto $\weights_t$ and its orthogonal component using the inner product with $\weights_t$:
$$
 \lr A^T_x A_x \weights_{t} =  \lr [\gamma_x  \theta_t + \theta^\perp_{t, A_x}]
$$
where $\gamma_x := \frac{\weights^T_t A^T_x A_x \weights_{t}}{\norm{\weights_t}^2}$, and $\weights^\perp_{t, A_x}$ is a vector orthogonal to $\weights_t$. Therefore the update \cref{eq:weight-update} can be rewritten as:
\begin{equation}
\label{eq:updated-weight-update}
\weights_{t+1} = (1 - \lr \gamma_x) \weights_t  - \lr [\theta^\perp_{t, A_x}+ b_x] =  (1 - \lr \gamma_x) \weights_t  - \lr \xi_x
\end{equation}
where 
\begin{equation}\xi_x := \theta^\perp_{t, A_x}+ b_x.
\label{eq:xi-def}
\end{equation} 

The update to $\theta_t$ consists of two terms: (a) the component $-\lr \gamma_x \weights_t,$ whose
magnitude depends on $A_x$ that reduces norm of $\weights$ and (b) an orthogonal drifting term $\xi_x$. The term $\xi_x$ also consists of two components: $\theta^{\perp}_{T, A_x}$ which is orthogonal to $\weights_t$ by construction, 
and $b_x = A^T_x c_x$ which we can assume to be orthogonal to $\weights_t$ since it is independent of the orientation of $\weights_t$. 
Thus, $\xi_x$ is orthogonal to $\weights_t$ and is determined by $A_x, c_x$ and $\weights_t$. These components are illustrated on
\cref{fig:drag-drift-interp-2d}. Most importantly the drift component $\xi_x$ is approximately orthogonal to $\theta$ and its contribution to
$\norms{\theta_{t+1}}$ is quadratic in $\lr$, while the drag component, has a linear contribution. Let us compute the change in norms given $x$: 
\begin{align}
\norms{\weights_{t+1}} - \norms{\weights_t} &=  (1 - \lr \gamma_x)^2 \norms{\weights_{t}} + \lr^2 \norms{\xi_x} - \norms{\weights_t}  \nonumber \\ 
&= -\lr\gamma_x(2 -\lr \gamma_x) \norms{\weights_t} + \lr^2 \norms{\xi_x}, &
\label{eq:delta-dist}
\end{align}
therefore the change in norm becomes zero when
\begin{equation}
\lr = \lr_x := \frac{2\gamma_x \norms{\weights_t}}{\norms{\xi_x} + \gamma_x^2 \norms{\weights_t}}.
\end{equation}
Now we can estimate the \emph{expected} change in norm for a fixed $\theta_t$ using \cref{eq:delta-dist}: 
\begin{align}
& \Exoversamples{\norms{\weights_{t+1}} - \norms{\weights_t} | \weights_t} = -2\lr\norms{\weights_t} \Exoversamples{\gamma_x|\weights_t} \nonumber \\ 
& \qquad + \lr^2 \Exoversamples{\norms{\xi_x} + \gamma_x^2\norms{\weights_t} | \weights_t}.
\end{align}

The expected change in norm becomes zero when
\begin{equation}
\label{eq:lr-for-theta}
    \lr = 2\norms{\weights_t}\frac{\Exoversamples{\gamma_x|\weights_t}}{\Exoversamples{\norms{\xi_x}} + \norms{\weights_t} \Exoversamples{\gamma^2_x | \weights_t}},
\end{equation}
or, equivalently, for fixed $\lambda$ the norm has zero expected change
\begin{align}
\label{eq:norm-given-fixed-lr}
    \norms{\weights_t}&  = \frac{\lambda \Exoversamples{\norms{\xi_x}|\weights_t}}{ 2 \Exoversamples{\gamma_x | \weights_t} - \lr\Exoversamples{ \gamma_x^2 | \weights_t}} \nonumber \\
    & = \frac{\lr \Exoversamples{\norm{\xi_x}^2| \weights_t} }{2\Exoversamples{\gamma_x| \weights_t}} + O(\lambda^2) .
\end{align}
For the special case when $\Exoversamples{A^T_xA_x} = I$, we have $\Exoversamples{\gamma_x|\theta_t} = I$,
and since $A_x$ is independent of $\theta_t$ we can assume that $\weights_{t, A_x}^\perp$ is orthogonal to $b_x$
and its magnitude is independent of $\theta_t$, therefore $\Exoversamples{\norms{\xi}|\theta} \approx  \Exoversamples{\norms{\xi}}$ and thus 
\begin{equation}
\label{eq:weights-given-lr}
\norm{\weights} =\sqrt{\Exoversamples{\norms{\xi}}} \sqrt{\lr/2} = \norm{c} \sqrt{\lr/2}.
\end{equation}
\end{proof}

For a fixed learning rate we can expect the distance to the global minimum to stabilize at a value proportional 
to $\sqrt{\lambda}$ and described by \cref{eq:weights-given-lr}.

This model turns out of to be very similar to additive batch-noise model of~\citet{Wu2019-bg} and~\citet{Schaul_undated-uj}, however instead of introducing it, we arrive to it from a empirical assumption that each batch optimizes its own loss function. Despite its simplicity it appears to capture well many of the phenomena of large deep neural networks training. 

An alternative derivation of the result above is based on Fokker-Planck equation for describing the evolution of the state density~\citep{Sato2014,Li2017,Chaudhari2018} and showing that the stationary solution of this equation is a Gaussian distribution (for which the samples are concentrated on a spherical shell).
While in general this scaling factor is opaque, there are several special cases allowing for further simplification of the equation.



\vspace{-.2cm}
\paragraph{Weight averaging: effect on the stationary distribution.}
\begin{table*}[t]
    \centering
    \begin{tabular}{l|c}
    \toprule
       Averaging method             & Equivalent learning rate for last $k$ steps \\
       \midrule
       \makecell[l]{Stochastic Weight Averaging \citep{Izmailov2018-wv}, \\
       window size $k$, cycle $c=1$} 
        &  $\lambda(i) \frac{k+1 - i}{k}$ 
        \\        
        \midrule
        Two point average $k$ step apart   & ${\lambda(i)}/2$  \\         
        \midrule
        Exponential Moving Average (decay $\delta$) &  $ \lr(i)(1 - (1 - \delta)^{k-i})$, where ($k \gg 1/\delta)$\\ 
\bottomrule
    \end{tabular}
    \caption{Equivalent learning rate schedules for different averaging methods. Here $i$ denotes the current step of SGD trajectory and $\lambda(i)$ is a learning rate pre-aggregation.  Even though EMA is computed over the entire trajectory, the contributions of points beyond $1/\delta$ decay exponentially, thus the equivalent schedule only need to be applied to the last $k \gg \frac{1}{\delta}$. }
    \label{tab:stochastic-methods}
\end{table*}

As we saw earlier, for a fixed learning rate the solution trajectory stabilizes an ellipsoid whose size is proportional to $\sqrt{\lambda}$, while  traversing it indefinitely. If we let it continue sufficiently long we end up with two solutions $\weights_1$ and $\weights_2$ that are samples on the same sphere. Therefore the average $\weights = (\weights_1 + \weights_2)/ 2$ has norm $\norm{\weights} = \norm{\weights} / \sqrt{2}$. Thus averaging two solutions with a higher learning rate $\lambda$ gives us practically identical solution as if following the trajectory from $\weights_1$ to $\weights_2$, but with the learning rate $\lr/2$. Geometrically, the interpolation between two points on a random trajectory results in a point inside the sphere (\cref{fig:drag-drift-interp-3d}), which translates to a lower loss. Remarkably similar effects are also observed for \imagenet{} as we show in~\cref{fig:momentary-vs-others}.
In~\cref{app:weight-averaging} we generalise this result to an arbitrary averaging kernel.

We note that previous interpretations proposed in~\citet{Kingma2014-vr} suggest
that weight averaging works by ``smoothing'' the model. Instead, we show that there is a natural geometric
interpretation that arises from the basic properties of stochastic gradient descent.  Similar intuition can also be applied to stochastic weight averaging (SWA; \citealp{Izmailov2018-wv}) and exponential moving averaging (EMA; \citealp{Kingma2014-vr}).

\paragraph{Weight averaging: equivalence to the reduced learning rate along the trajectory.}
As we saw above, aggregation is equivalent to a specific learning schedule in the  stationary regime. Here we consider another angle: we show that a similar result also holds when we look at window sizes, where we can assume that gradient on a fixed batch doesn't change much along the trajectory.  We formalize this result in the lemma below. 
\begin{lemma}{(for proof see ~\cref{app:weight-averaging})}
\label{lem:matching-gradients}
If gradient of the loss with respect to a fixed batch is approximately constant, then for SWA with cycle of length $c=1$,
two-point averaging and EMA, the equivalent learning schedule is described in~\cref{tab:stochastic-methods}.
\end{lemma}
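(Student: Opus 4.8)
The plan is to treat all three averaging schemes uniformly: each produces a weighted average $\bar{\weights} = \sum_{j} \alpha_j \weights_j$ of iterates along the trajectory (with $\sum_j \alpha_j = 1$), and the goal is to rewrite this average as the endpoint of a single SGD trajectory run with a rescaled learning rate. First I would unroll the SGD recursion. Writing $g_i$ for the stochastic gradient used at step $i$, the update $\weights_i = \weights_{i-1} - \lambda(i) g_i$ telescopes to
\begin{equation}
\weights_j = \weights_0 - \sum_{i=1}^{j} \lambda(i)\, g_i .
\end{equation}
Substituting into the average and exchanging the order of summation gives
\begin{equation}
\bar{\weights} = \weights_0 - \sum_{i=1}^{k} \lambda(i)\Big(\sum_{j \ge i} \alpha_j\Big) g_i ,
\end{equation}
so the gradient $g_i$ enters with an \emph{effective} coefficient $\lambda(i)\sum_{j\ge i}\alpha_j$ rather than $\lambda(i)$.

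Next I would invoke the hypothesis. The approximately-constant-gradient assumption says that for each fixed batch the gradient barely changes over the window, so replaying the same sequence of batches from $\weights_0$ with learning rate $\tilde\lambda(i) := \lambda(i)\sum_{j\ge i}\alpha_j$ visits nearby points and therefore produces gradients $\tilde g_i \approx g_i$. Its telescoped endpoint $\weights_0 - \sum_i \tilde\lambda(i)\tilde g_i$ then coincides, to first order, with $\bar{\weights}$. This identifies the averaged iterate with the endpoint of a single trajectory whose learning-rate schedule is $\tilde\lambda(i)$, reducing the lemma to computing the partial tail sums $\sum_{j\ge i}\alpha_j$ for each scheme.

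Finally I would evaluate those tail sums. For SWA with cycle $c=1$ and window $k$ the weights are uniform, $\alpha_j = 1/k$, so $\sum_{j\ge i}\alpha_j = (k+1-i)/k$, giving $\tilde\lambda(i) = \lambda(i)(k+1-i)/k$. For the two-point average only the endpoints carry weight $1/2$, so the tail sum is $1/2$ for every $i$ inside the gap and $\tilde\lambda(i) = \lambda(i)/2$. For EMA with decay $\delta$ the weight on $\weights_j$ is $\delta(1-\delta)^{k-j}$, and the geometric tail sum $\sum_{j=i}^{k}\delta(1-\delta)^{k-j} = 1 - (1-\delta)^{k-i+1}$ collapses to $1-(1-\delta)^{k-i}$ once $k \gg 1/\delta$, where the neglected initialization mass is exponentially small and the off-by-one in the exponent is immaterial. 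Each of these matches the corresponding row of \cref{tab:stochastic-methods}.

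I expect the genuine mathematical content — and the only non-bookkeeping step — to be the first-order identification in the second paragraph: the telescoped average is built from the gradients $g_i$ evaluated along the \emph{original} trajectory, whereas a real rescaled-learning-rate trajectory evaluates its gradients at different weights. Controlling this discrepancy is exactly what the constant-gradient hypothesis buys, and the hard part is arguing that the accumulated perturbation stays first-order small over a window of length $k$ rather than compounding; the index-swapping and the geometric-series evaluation are routine.
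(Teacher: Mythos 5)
Your proposal is correct and follows essentially the same route as the paper's proof: unroll the SGD recursion, exchange the order of summation so each gradient $\xi_{x_i}$ picks up the tail-sum coefficient, and invoke the approximately-constant-gradient hypothesis to identify the result with the endpoint of a single trajectory run at the rescaled learning rate. The only difference is presentational — you handle all three schemes at once via a generic kernel $\alpha_j$ and its tail sums, whereas the paper carries out the identical telescoping-and-coefficient computation separately for two-point averaging, SWA, and EMA.
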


Note that under our simplifying assumption of matching gradients, we have shown a much stronger result: not only losses match, but the actual trajectories match. In the actual training, as can be seen in~\cref{fig:divergence-of-gradients} the gradients, while staying fairly aligned, do exhibit some amount of divergence. Further the divergence in $\weights$ space is also significant (though much less then if batches were sampled  independently). Despite that, we can see from~\cref{fig:momentary-vs-others} that \emph{loss} exhibit remarkable match between different averaging schemas and learning rate schedules. We conjecture that the requirements of this lemma can be relaxed in favor of loss match, where instead of matching the full trajectory, the different methods reach different points on the loss surface.

To recap, in our model we have shown that \emph{both} stationary regime and early in the trajectory the weight aggregation has equivalent learning schedules. It remains a subject of future work to bridge the theory to include the \emph{entire} trajectory, as appears to be the case for real deep neural networks.

\begin{figure*}[ht]
     \centering
    \includegraphics[width=0.32\textwidth]{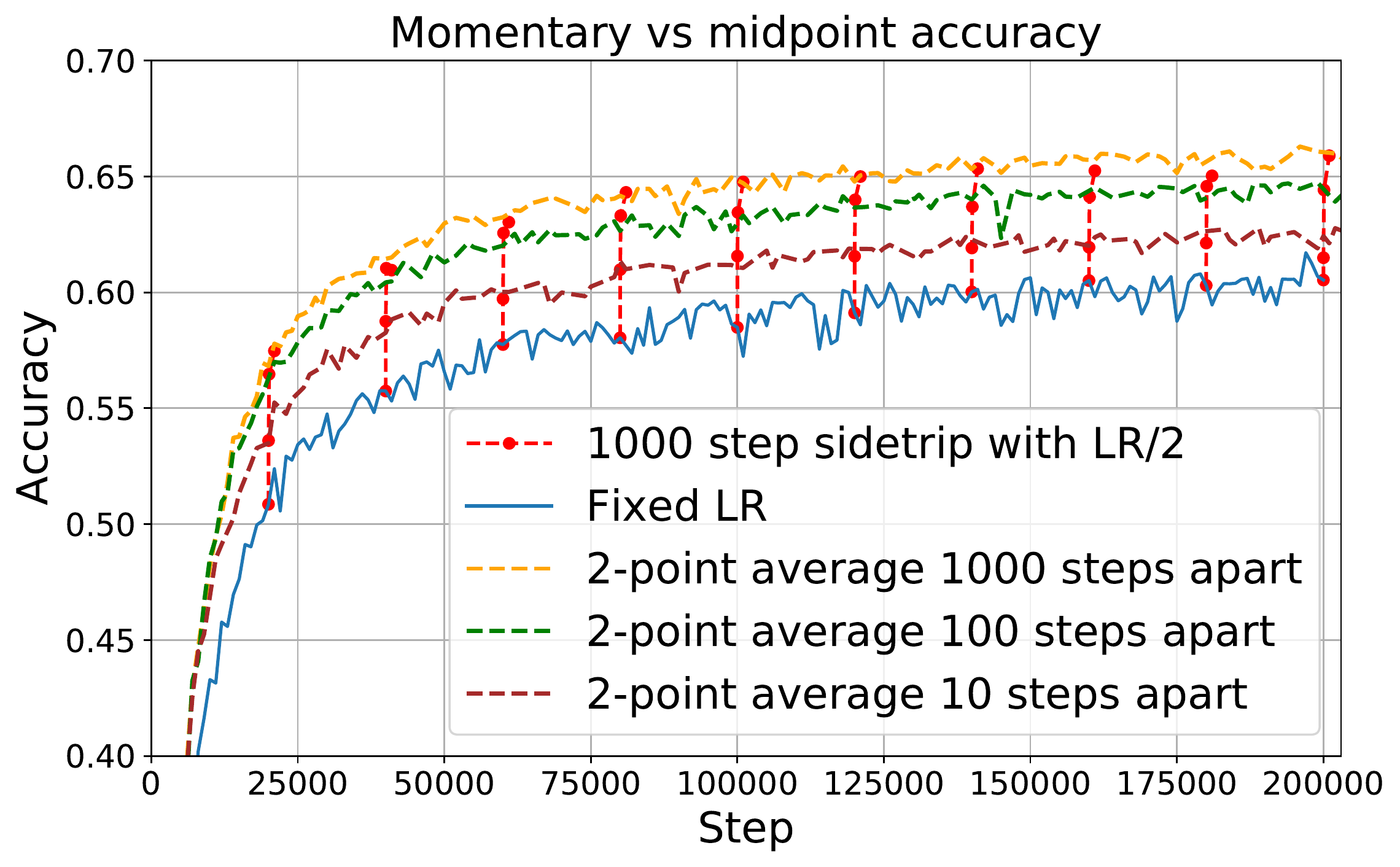}
    \includegraphics[width=0.32\textwidth]{figures/momentary_vs_average_0.1.pdf}
    \includegraphics[width=0.32\textwidth]{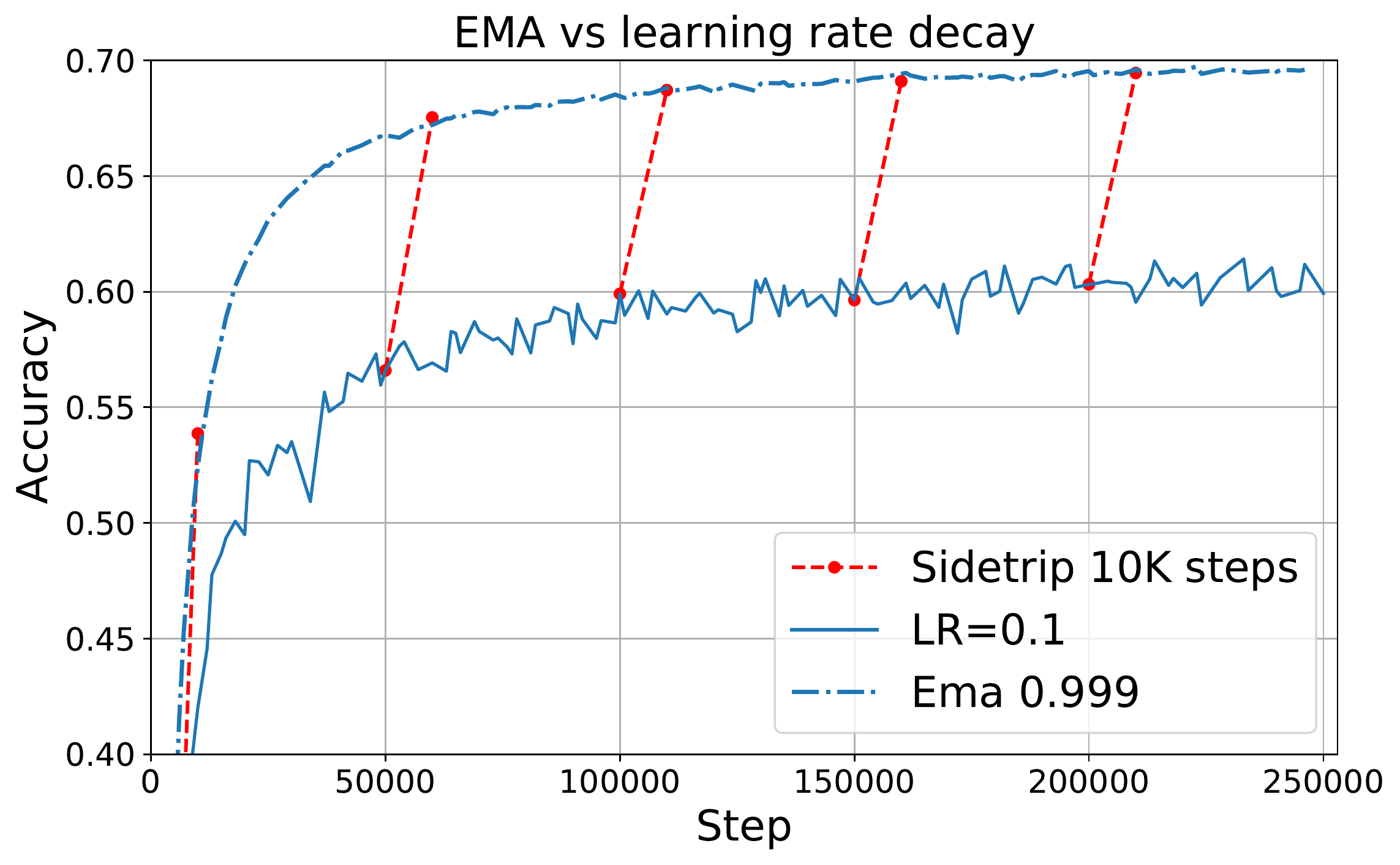}
     \caption{Comparing learning rate schedule with aggregation. The dotted vertical red lines show the set of  independent trajectories ``side-trips'' with appropriate learning rate schedule that start at corresponding point in the main trajectory. For midpoint and average, the red circles at midpoint show the accuracy at 1, 10, 100 and 1000 steps. For EMA the side-trips are 3000 steps. The alternating dash/dot show the running averages of the main trajectory (solid line, bottom).}
     \label{fig:momentary-vs-others}
 \end{figure*}
 
 \definecolor{mplblue}{rgb}{0.12109375, 0.46484375, 0.703125}
 \definecolor{mplorange}{rgb}{1.00, 0.50, 0.05}
 \definecolor{mplgreen}{rgb}{0.17, 0.63, 0.17,}
 \begin{figure*}[ht]
     \centering         
    \includegraphics[width=0.95\textwidth]{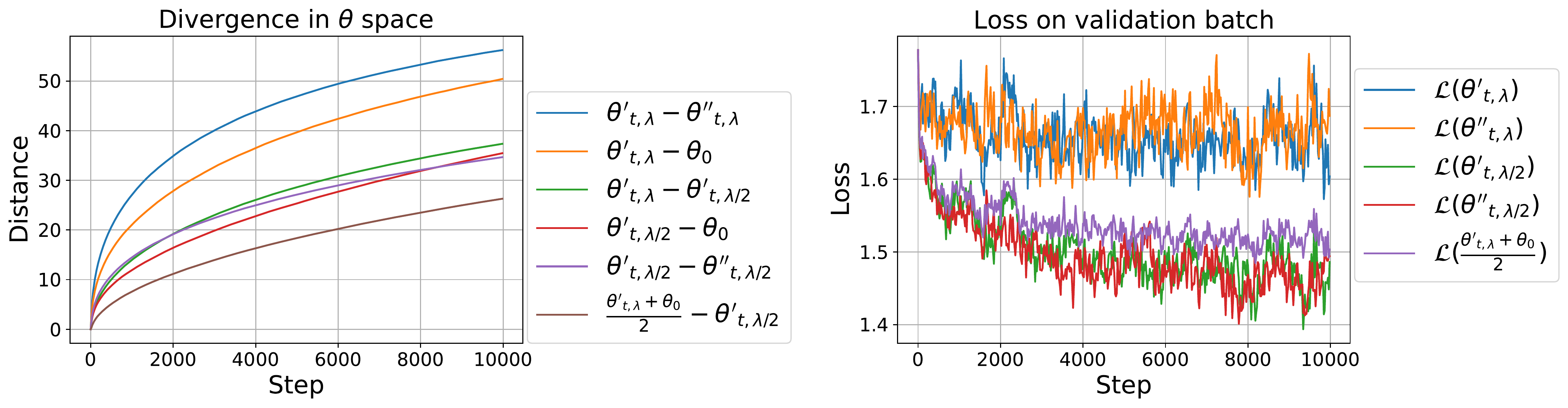}  
     \caption{Evolution of $L_2$ distances divergence  in weight space, the starting point $\weights_0$ was selected at 75K, with the starting learning rate at $\lambda=0.05$ and $\lambda/2$. $\theta'$ and $\theta''$ correspond to two different trajectories using differently sampled mini-batches. Note how the loss of the  average point of the  trajectory with learning rate $\lambda$ tracks the trajectory with $\lambda/2$.}
     \label{fig:distance-in-theta-space}
 \end{figure*}
\begin{figure*}[t]
     \centering         
    \includegraphics[width=0.99\textwidth]{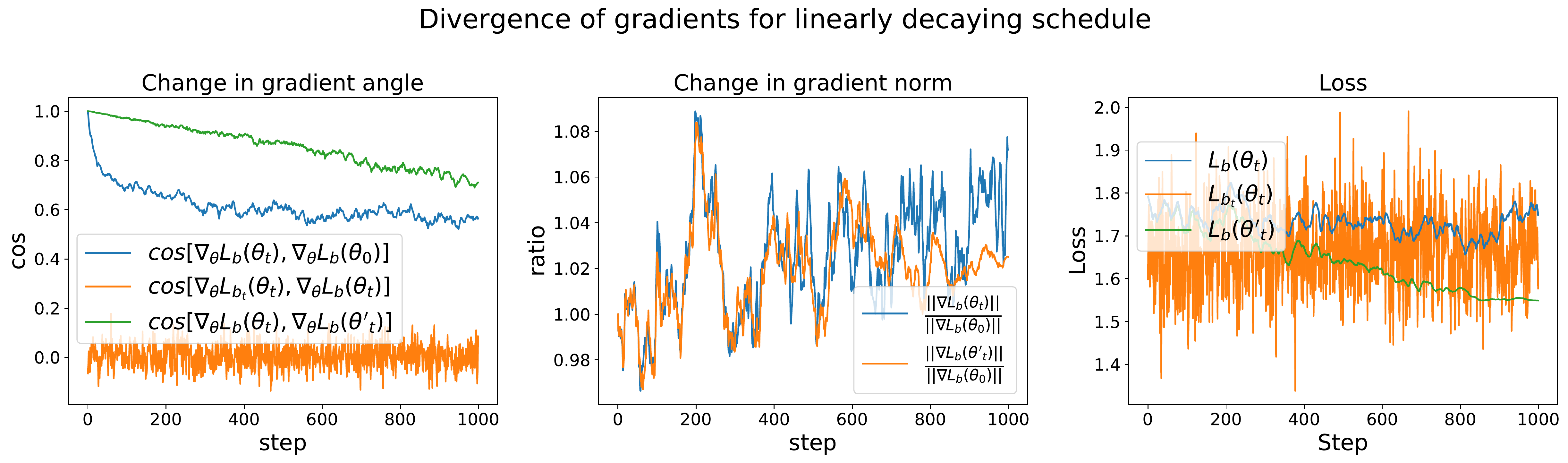}  
     \caption{Evolution of $L_2$ gradient divergence, the starting point $\theta_0$ was selected at 75K
     for different LR schedules. We compare fixed batch $b$ and training batch $b_t$.  For fixed batch the direction of the gradient changes very slowly over 1000 steps ({\bf\color{mplblue}blue curve}), while for different batches ({\bf\color{mplorange} orange curve}) all nearly orthogonal even at the \emph{same step}. The top most ({\bf \color{mplgreen} green curve}) on the left chart is the $\cos$ of the angle between two gradients at trajectory $\theta_t$ - fixed learning rate of $0.05$, and $\theta'_t$ - the linearly decaying learning rate $\lambda=0.05\frac{1000-i}{1000}$. The middle graph shows the change in gradient norm between different schedules. The rightmost graph, ({\bf\color{mplorange}orange curve}) shows the losses for training batch, while the {\bf\color{mplgreen}green curve}  and {\bf\color{mplblue}blue curve} show the loss of a fixed batch $b$ for constant and linearly decaying schedule respectively. Note that while green and blue curves maintain good gradient alignment, despite green curve exhibits significant loss change.  Best viewed in color.
}
     \label{fig:divergence-of-gradients}
 \end{figure*}

\vspace{-.2cm}
\section{Experiments}
\label{sec:experiments}
In this section we describe our additional experiments. Following our setup from \cref{sec:empirical}, we experiment with \imagenet, but also include results from Cifar10 and Cifar100~\citep{cifar10}.  For all datasets we use identical setup with ResNet-34, and use SGD with momentum 0.9. 

\paragraph{Different aggregation and learning rate schedule on \imagenet~and CIFAR datasets }
In this experiment we show that learning rate schedules described in \cref{tab:stochastic-methods} match stochastic averaging on several large datasets. 
On \cref{fig:momentary-vs-others} we show our results for the \imagenet. On \cref{fig:momentary-vs-others} we show validation accuracy, howeve similar results also hold for training splits as well for the actual losses, as we show in supplementary materials~\cref{fig:train-loss-const-cosine-lr}. Additionally  we include the results on Cifar10 and Cifar100 in the supplementary materials~\cref{fig:cifar10-100}.   

On \cref{fig:distance-in-theta-space} we show  how the solution for two-point averaging diverges from the equivalent learning rate schedule. As can be seen, the training loss matches our estimate fairly close, while arriving at two distinct solutions. Even though the average point is significantly apart, in relative terms the two solutions are much closer than two independently trained solution with identical learning rate. 

\paragraph{Gradient alignment and divergence of trajectories }
As we recall from~\cref{lem:matching-gradients}, we relied on the gradients
for a fixed batch essentially unchanging as we move along the trajectory. On \cref{fig:divergence-of-gradients} we show that gradients on the fixed batch $b$ indeed change very little. We consider two \emph{side-trips}, using standard sequence of mini-batches, but we measure the gradient on a \emph{fixed} mini-batch and compare its direction with the gradient at the initial step. As can be seen, the gradient norm changed less than 5\%, while $\mathop{cos}$ stayed generally above 0.5. For high dimensional space this means that the gradient on a fixed batch stays within a very narrow cone \emph{throughout} the trajectory. Remarkably, the angle between two parallel points of two trajectories (green curve) is even closer aligned than the gradient between the end-point and the start point. 

For reference we also include the angle between two \emph{independent} batches -- as one can see those gradients are typically almost perfectly orthogonal. All this suggests that we can generally expect the requirements of lemma \cref{fig:divergence-of-gradients} to hold even on large datasets.

\paragraph{Basins of attraction}
\begin{figure}[t]
    \centering
    \includegraphics[width=0.32\textwidth]{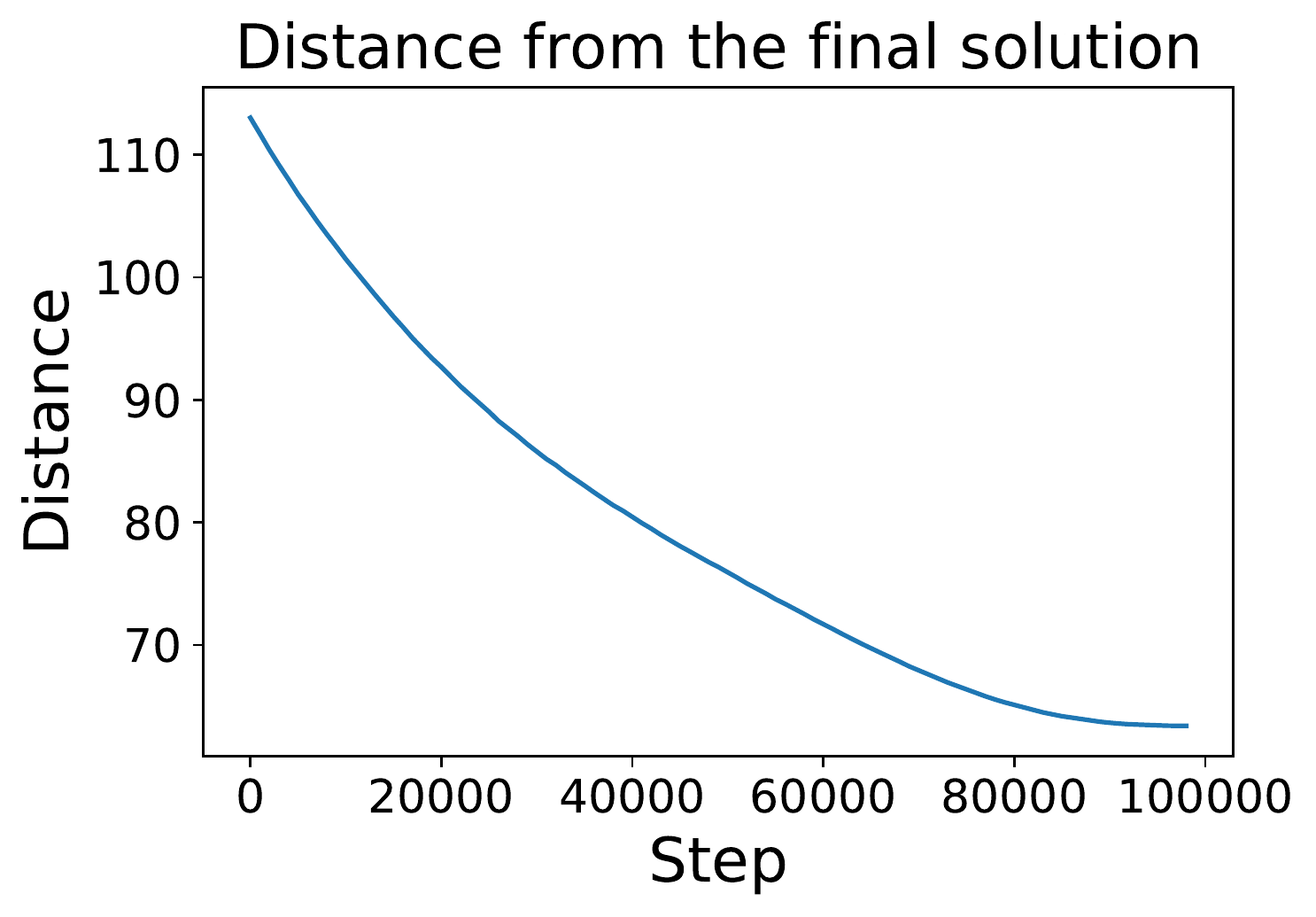}
    \label{fig:basins-of-attraction}
    \caption{Distance between intermediate weights $\weights_t$ and an independently
    trained alternative solution $\theta'$}
\end{figure}   

It has been known that multiple trajectories sharing initial segment end up in the attraction basin as measured by the absence of the loss barrier \citep{linear-mode-connectivity}. Here we show that in addition to being part of the same basin, the final point of one trajectory and an independent trajectory re-trained using different sequence of mini-batches monotonically decreases until saturates at some constant level. On \cref{fig:basins-of-attraction} we show how this distance changes for one such trajectory. This provides further evidence that independent trajectories land on a sphere around some fixed minimum. 
\subsection{Synthetic model experiments}

\paragraph{Fast and slow convergence}


We have previously shown that the effect of weight averaging can be simulated approximately by using a properly chosen learning rate schedule.
However, in practice, it is often advantageous to use fixed learning rate throughout training, especially if the new data is constantly arriving and the model needs to be trained continuously.
In this case, the weight averaging can improve model performance, while still allowing us to train the model with a large learning rate guaranteeing fast convergence.

The beneficial effect of weight averaging is particularly pronounced in situations where training dynamics is characterized by a wide spectrum of time scales, which is typical in practical scenarios \citep{Ghorbani2019}.
This rich spectrum of timescales 
manifests itself as the existence of elongated ``trenches'' in the loss landscape and long tails characteristic for the evolution of the loss during training.

Since weight averaging is generally sensitive to the time scale of the underlying weight evolution, one can expect it to have different effects on ``slow'' and ``fast'' degrees of freedom.
As discussed in Appendix~\ref{app:weight-averaging} in more detail, weight averaging will have little effect on the slow dynamics, but is expected to reduce the size of the stationary distribution for fast degrees of freedom and reduce the corresponding contribution to the loss as if the learning rate was in fact smaller.
Notice that using smaller learning rate without weight averaging would have a disadvantage of slowing down the convergence in slow coordinates and thus using large learning rate with weight averaging is expected to help us improve model performance without sacrificing the convergence speed.

We illustrate this intuition by solving~\eqref{eq:weight-update} for $b_x \sim \normalstd$ and a diagonal $\langle A^T_x A_x \rangle$ with values $1$ and $0.015$ for the fast and slow degrees of freedom correspondingly.
The evolution of the loss function in this system is shown in \cref{fig:multiscale} for two different learning rates: (a) $\lambda_0=5\cdot 10^{-2}$ and (b) $\lambda_1= 2\cdot 10^{-2}$.
Experiments with $\lambda=\lambda_0$ are conducted both with and without the exponential moving weight averaging (with a decay rate of $450$ steps).
\cref{fig:multiscale} illustrates that there are indeed two time scales in $L(t)$, fast and slow, and that weight averaging can dramatically reduce the stationary loss $L$ without sacrificing the speed of convergence, unlike when training with the smaller learning rate $\lambda=\lambda_1$.
Also, as shown in \cref{fig:multiscale-app}b, in the appendix, weight averaging has a much larger effect on the fast degrees of freedom, significantly reducing corresponding contributions to the overall loss, while having only a mild effect on slow coordinates.

\begin{figure}[t]
    \centering
    \includegraphics[width=0.4\textwidth]{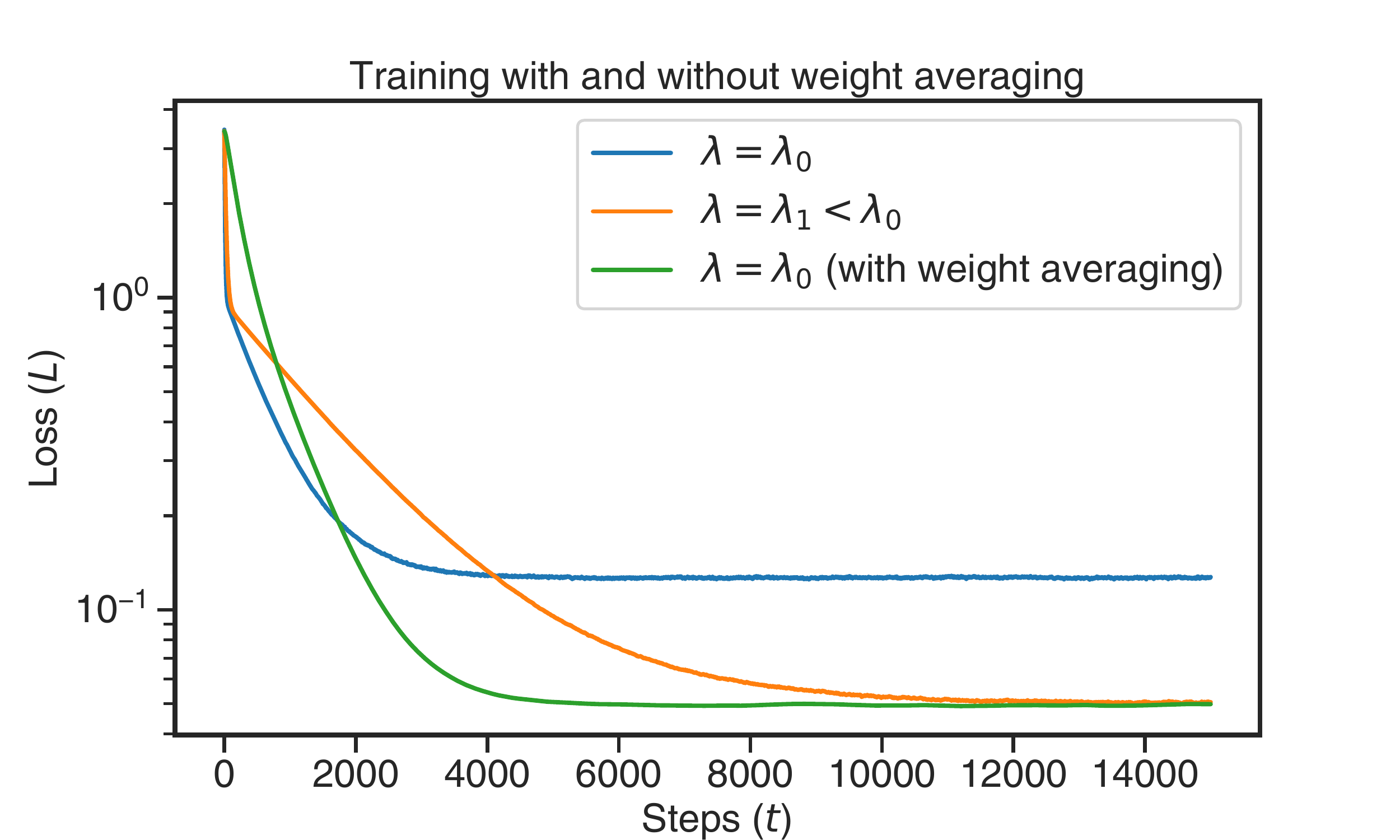}
    \label{fig:multiscale-a}
    \caption{
        Loss dynamics in a system governed by with ``fast'' ($\Omega_{ii}=1$) and ``slow'' ($\Omega_{ii}=0.015$) degrees of freedom. Larger ($\lambda_0=5\cdot 10^{-2}$) and smaller ($\lambda_1 = 2\cdot 10^{-2}$) learning rates v.s. EMA; 
    }
    \label{fig:multiscale}
\end{figure}

\section{Open question and conclusions}
\label{sec:conclusions}
We explored some remarkable properties of the learning rate in 
stochastic gradient descent, and demonstrated the novel connection between 
iterate averaging and learning rate schedules. We showed that this connection can be observed both in simple theoretical models
and in large-scale training on multiple datasets.
We hope that this work paves the way to further understanding on the role of the
learning rate in training. One direction that we see is developing more general models that would allow 
to study other phenomena commonly observed during training, such as overfitting and trajectory-dependent learning rate.

\clearpage
\bibliography{9_references}
\bibliographystyle{iclr2022_conference}

\appendix

\onecolumn
\begin{center}
{\Large \bf Supplementary materials for \\``Training trajectories, aggregation and the curious role of the learning rate''.}
\end{center}
\vspace{.5cm}
\hrule
\vspace{.5cm}
\section{Weight Averaging}
    \label{app:weight-averaging}

    In this section, we consider the evolution of $\weights$ governed by
    \begin{equation}
        \weights_{t+1} = \weights_t - \lr \lmat \, \weights_t - \lr \tilde{b}_{\batch},
        \label{eq:weights-simplified}
    \end{equation}
    a simplified version of the finite-batch version of \eqref{eq:weight-update}:
    \begin{equation*}
        \weights_{t+1} = \weights_t - \frac{\lr}{|\batch|} \sum_{x \in \batch} A^T_x A_x \weights_{t} - \frac{\lr}{|\batch|} \sum_{x\in \batch} b_x,
    \end{equation*}
    where $\batch$ is a batch of samples, $\lmat = \langle A^T_x A_x \rangle$ is a matrix and $\tilde{b}_{\batch}\sim \normal{\covariance}$ is a multivariate normal random variable with covariance $\covariance=\coord\coord^\top$.

    Here, we first show how \eqref{eq:weights-simplified} can be reduced even further by using a coordinate transformation that whitens the batch noise.
    We then consider a moving average of the training trajectory, derive a simple equation governing it and characterize the resulting converged steady-state distribution.

\subsection{Simplifying gradient descent equation}

    Stochastic gradient descent trajectories following \eqref{eq:weights-simplified} depend on both $\lmat$ and $\covariance$ matrices.
    Performing a change of coordinates $\weights_t = \coord u_t$, we can remove the dependence on one of the matrices if $\coord$ is an invertible matrix such that $\coord\coord^\top=\covariance$.
    Indeed, noticing that $\tilde{b}_\batch$ can be represented as $-\coord \xi_\batch$ with $\xi_\batch \sim \normaln(0,1)$, we obtain:
    \begin{equation*}
        \coord u_{t+1} = 
        \coord u_t - \lr \lmat \coord u_t + \lr \coord \xi_\batch,
    \end{equation*}
    and finally get a simpler description of the stochastic gradient descent trajectory:
    \begin{equation}
        \label{eq:simp-trajectory}
        u_{t+1} = (\id - \lr \Omega) u_t + \lr \xi_t,
    \end{equation}
    where our new equation depends on only one matrix $\Omega := \coord^{-1} \lmat \coord$.

    The solution of \eqref{eq:simp-trajectory} can be obtained recursively and reads:
    \begin{gather}
        \label{eq:finu}
        u_{t+1} = (\id - \lr \Omega)^{t+1-\tau} u_\tau + \lr \sum_{t'=\tau}^t (\id - \lr \Omega)^{t-t'} \xi_{t'}.
    \end{gather}
    
\subsection{Moving average of the SGD trajectory}

    In Section~\ref{sec:model}, we presented a simple intuitive explanation of the fact that the two-point average evolves similarly to an ordinary SGD trajectory with an effective learning rate of $\hat{\lr} = \lr/2$.
    Here we prove a more general result for an arbitrary averaging procedure.
    
    Consider a training trajectory $u_t$ solving \eqref{eq:simp-trajectory} and let $\hat{u}_{t} \equiv \sum_{k=0}^{\infty} \mu_k u_{t-k}$ be some average of the training trajectory with the real averaging kernel $\{\mu_k \in \R | k \in 0, \dots,\infty\}$.
    It is easy to see that $\hat{u}_{t}$ satisfies the following equation:
    \begin{gather}
        \label{eq:av-trajectory}
        \hat{u}_{t+1} =  (\id - \lr \Omega)\hat{u}_{t} + \lr \sum_{k=0}^{\infty} \mu_k \xi_{t-k}.
    \end{gather}
    Introducing $\nu_t \equiv \sum_{k=0}^{\infty} \mu_k \xi_{t-k}$, we see that this random process is characterized by $\langle \nu_t \rangle = 0$ and an autocorrelation $C_{t,\delta} \equiv \langle \nu_t \nu_{t+\delta} \rangle = \sum_{k=0}^{\infty} \mu_k \mu_{k+\delta}$ independent of time $t$, where the averaging is performed over different realizations of the batch noise $\xi$.
    
    The solution of \eqref{eq:av-trajectory} can be obtained by analogy with how solution \ref{eq:finu} was obtained for \eqref{eq:simp-trajectory}:
    \begin{gather*}
    \hat{u}_t = \Gamma^t \hat{u}_0 + \lr \sum_{\tau=1}^{t} \Gamma^{t-\tau} \nu_\tau,
    \end{gather*}
    where we introduce $\Gamma \equiv \id - \lr \Omega$.
    Choosing $t=T$ for some sufficiently large $T$ and assuming that all eigenvalues $\kappa_k$ of $\Gamma$ are characterized by $0 < \kappa_k < 1$, the first term ends up being exponentially small $\sim \exp(-\kappa_{\rm max} T)$ and
    \begin{gather*}
    \hat{u}_T \approx \lr \sum_{\tau=0}^{T-1} \Gamma^{\tau} \hat{\nu}_{\tau},
    \end{gather*}
    where $\hat{\nu}_\tau \equiv \nu_{T-\tau}$.
    
    Now let us look at the distribution $p(\hat{u}_T)$ for a fixed time $T$ and different realizations of $\nu$.
    It is easy to see that $\langle \hat{u}_{T} \rangle = 0$ since $\langle \hat{\nu}_\tau \rangle=0$ for each $\tau$.
    Furthermore,
    \begin{gather*}
    \langle \hat{u}_{T} \hat{u}_{T} \rangle = \lr^2
    \left\langle
    \sum_{\tau,\tau'=0}^{T-1} \Gamma^{\tau+\tau'} \hat{\nu}_{\tau}
    \hat{\nu}_{\tau'} 
    \right\rangle =
    \lr^2
    \sum_{\tau,\tau'=0}^{T-1} \Gamma^{\tau+\tau'}
    \left\langle \hat{\nu}_{\tau} \hat{\nu}_{\tau'} \right\rangle.
    \end{gather*}
    Notice that here we average over different realizations of $\nu_t$ and assume homogeneity of $\nu_t$ in time since $C_{t,t'-t}$ is independent of $t$ and only depends on the difference $t'-t$.
    Assuming that $T$ is sufficiently large for both $\kappa_{\rm max} T \gg 1$ and $C_{t>T} \ll C_0$ to hold, we can approximate (changing the integration bounds where $\Gamma^{\tau+\tau'} C_\delta$ is vanishingly small):
    \begin{gather*}
    \langle \hat{u}_{T} \hat{u}_{T} \rangle \approx
    \lr^2 \left(
    \sum_{\tau=0}^{T-1} \Gamma^{2\tau} C_0 + 2 \sum_{\delta=1}^{T-1}
    \sum_{\tau=0}^{T-1} \Gamma^{2\tau+\delta} C_\delta
    \right).
    \end{gather*}
    Here the coefficient of $2$ emerges because $C_\delta=-C_\delta$ for $\delta > 0$.
    This can also be rewritten as (replacing $T$ with $T+1$):
    \begin{gather}
    \langle \hat{u}_{T+1} \hat{u}_{T+1} \rangle \approx
    \lr^2 X_T \left( C_0 +
    2 \sum_{\delta=1}^{T} C_\delta \Gamma^\delta \right),
    \label{eq:uu-a}
    \end{gather}
    where $X_T \equiv \sum_{\tau=0}^{T} \Gamma^{2\tau}$.
    
    Now let us look at the limit of $T\to \infty$, assuming for simplicity that $C_\delta$ is bounded as $\delta \to \infty$ and assuming that the eigenvalues of $\Gamma=\id-\lr \Omega$ are all smaller than $1$.
    The expression for $X_\infty$ can be simplified by relating it to Taylor series for $1/(1-x)$, specifically notice that:
    \begin{gather*}
      (\id-\Gamma^2) X_\infty =
      (\id-\Gamma^2) \sum_{\tau=0}^{\infty} \Gamma^{2\tau} =
      \sum_{\tau=0}^{\infty} \Gamma^{2\tau} - \sum_{\tau=1}^{\infty} \Gamma^{2\tau} = \id
    \end{gather*}
    and therefore $X_\infty = (\id-\Gamma^2)^{-1}$.
    We can then approximate \eqref{eq:uu-a} as follows:
    \begin{gather*}
    F \equiv \lim_{T\to \infty} \langle \hat{u}_{T+1} \hat{u}_{T+1} \rangle \approx
    \lr^2 (\id-\Gamma^2)^{-1} \left( C_0  +
    2 \sum_{\delta=1}^{\infty} C_\delta \Gamma^\delta \right),
    \end{gather*}
    or recalling that $\Gamma=\id - \lr\Omega$:
    \begin{gather}
    F \approx
    \lr (2 \Omega - \lr \Omega^2)^{-1} \left( C_0  +
    2 \sum_{\delta=1}^{\infty} C_\delta \Gamma^\delta \right).
    \label{eq:f-final}
    \end{gather}
    
    This final expression for $F$
    \internal{(verified numerically with a good agreement -- see {\bf \href{https://colab.corp.google.com/drive/1hGlbrbqptT9xlIKe0_8fZh_hOTLqnnsC}{the notebook}})}
    connects the covariance of the stationary distribution in the weight space to the autocorrelation of the averaging kernel $\mu_t$.
    One important conclusion is that since $\id-\lr \Omega$ is contracting (assuming $\Omega$ is positive-definite), the long tail of $C_\delta$ can be effectively cancelled by $(\id - \lr\Omega)^{\delta}$.
    In other words, if the characteristic averaging time exceeds $1/\kappa$, where $\kappa$ is the eigenvalue of $\id - \lr\Omega$, then the corresponding width of the stationary distribution will be inhibited by weight averaging.

\subsection{Two-point average}

    The final expression \ref{eq:f-final} can then be applied to an arbitrary averaging procedure.
    For example, the following result generalizes the geometric derivation for the two-point average to an arbitrary\footnote{$\lr$ still needs to be sufficiently small for system dynamics to not diverge} learning rate $\lr$ and the distance $\Delta$ between steps.

    Since we define the averaging procedure via $\mu_0=1/2$ and $\mu_\Delta=1/2$ (and $\mu_k=0$ otherwise), we obtain $C_0=1/2$ and $C_\Delta=1/4$.
    Substituting this expression into \eqref{eq:f-final}, we obtain:
    \begin{equation*}
        F_\Delta \equiv \lim_{T\to \infty} \langle \hat{u}_{T} \hat{u}_{T} \rangle \approx S_\lr \frac{\id + (\id - \lr \Omega)^{\Delta}}{2},
    \end{equation*}
    where $S_\lr \equiv \lr (2 \Omega - \lr \Omega^2)^{-1}$.
    This expression can also be interpreted as:
    \begin{equation}
        \label{eq:eff-lam}
        \hat{\lr} = \lr \frac{1 + (1 - \lr \kappa)^{\Delta}}{2},
    \end{equation}
    where $\hat{\lr}$ is the effective learning rate for a degree of freedom corresponding to an eigenvector of $\Omega$ with an eigenvalue of $\kappa$.
    This expression generalizes our previous observation that $\hat{\lr}\approx \lr$ for sufficiently small $\Delta \sim 1$ and $\hat{\lr} \approx \lr/2$ for $\Delta \gg 1$ when $(1 - \lr \kappa)^\Delta \approx 0$.

\subsection{Multi-point average}

    Now let us study weight averaging over $n$ points separated by $\Delta$ steps each, i.e., $\hat{u}_t = n^{-1}(u_t + u_{t-\Delta} + u_{t-2\Delta} + \dots + u_{t-(n-1)\Delta})$.
    Since this corresponds to $\mu_{k\Delta}=1/n$ for $k=0,\dots,n-1$, we obtain:
    \begin{gather*}
        C_{k\Delta} = \sum_{\tau=0}^{\infty} \mu_{k\Delta} \mu_{(k + \tau)\Delta} = \frac{n-k}{n^2}.
    \end{gather*}
    Substituting this expression in \eqref{eq:f-final}, we obtain:
    \begin{gather}
    C_0 + 2 \sum_{\delta=1}^{\infty} C_\delta \Gamma^\delta =
    \frac{1}{n} + \frac{2}{n} \sum_{k=1}^{n-1} \left( 1 - \frac{k}{n} \right) \Gamma^{k\Delta} =
    \frac{1}{n} + \frac{2}{n} \left(G_1 - \frac{G_2}{n}\right),
    \label{eq:exp-for-c}
    \end{gather}
    where
    \begin{gather*}
    G_1 \equiv \sum_{k=1}^{n-1} \Gamma^{k \Delta}, \qquad
    G_2 \equiv \sum_{k=1}^{n-1} k \Gamma^{k \Delta}.
    \end{gather*}
    It is not difficult then to verify that:
    \begin{gather*}
    G_1 = (\id - \Gamma^\Delta)^{-1} \Gamma^{\Delta} \left(\id - \Gamma^{(n-1)\Delta} \right), \\
    G_2 = (\id - \Gamma^\Delta)^{-2} \Gamma^{\Delta} \left(\id - n \Gamma^{(n-1)\Delta} + (n-1) \Gamma^{n\Delta} \right).
    \end{gather*}
    These expressions for $G_1$ and $G_2$ together with \eqref{eq:f-final}  and \eqref{eq:exp-for-c} fully define the covariance $F$ for arbitrary values of $\lr$, $\Delta$ and an arbitrary $\Omega$.
    
    As before, in the limit of $\Delta \to \infty$ when $\Gamma^\Delta \to 0$, we see that
    \begin{gather*}
        F \approx
        \frac{S_\lr}{n} \left( \id  + \frac{2(n-1)}{n} \Gamma^{\Delta} \right) + o(\Gamma^\Delta).
    \end{gather*}
    In other words, as one would expect, the effective learning rate is decreased by approximately a factor of $n$.

\subsection{Proof of Lemma~\ref{lem:matching-gradients}}

    \begin{proof}
    We have $\weights_{t_2,\lr(t)} = \weights_{t_1} + \sum_{i={t_1}}^{t_2} \lambda(i) \xi_{x_i}(\theta_i)$, 
    where $\xi_{x_i}$ is a stochastic gradient of batch $x_i$ with respect to $\theta_i$. 
    thus 
    $$\frac{\weights_{t_2}  + \weights_{t_1} }{2} = 
    \weights_{t_1} + \sum_{i={t_1}}^{t_2-1} \frac{\lambda(i)}{2} \xi_{x_i} \approx \weights_{t_2, \lr(t)/2},$$
    where we assumed that for fixed $x_i$, the gradient $\xi_{x_i}$ changes very slowly and thus  $\xi_{x_i,\lr(t)} \approx \xi_{x_i, \lr/2(t)}$.
    
    Similarly for stochastic weight averaging we have 
    \begin{align}
    \theta_{\text{swa}} &= \frac{1}{t_2-t_1}\sum_{t=t_1}^{t_2} \weights_t \\
    & = \frac{1}{t_2 - t_1}\sum_{t=t_1}^{t_2}\left[\weights_{t_1} + \sum_{i=t_1}^{t-1} \lambda(i) \xi_{x_i}\right] \\
    &= 
    \weights_{t_1} +  \sum_{t=t_1}^{t_2-1} \lambda(t) \frac{t_2 - t}{t_2 - t_1} \xi_{x_t} 
    \approx \weights_{t_2,\frac{t_2 - t}{t_2 - t_1}\lr(t)}
    \end{align}
     Finally, for exponential moving average we have:
     \begin{eqnarray}
     \weights_{\text{ema}, t_2} & = &    \delta\sum_{t=0}^{t_2} \weights_t (1 - \delta)^{t_2 - t } \\
     &\approx & =
      \delta\sum_{t=t_1}^{t_2} \weights_t (1 - \delta)^{t_2 - t }\\
      &= &\delta[\weights_{t_1} \sum_{t=t_1}^{t_2} (1-\delta)^{t_2-t} + \\
      && + \sum_{t=t_1}^{t_2-1} \lambda(t) \xi_{x_t} \sum_{j=t}^{t_2-1} (1-\delta)^{t_2 - j-1}] \\
    &\approx& \weights_{t_1} + \sum_{t=t_1}^{t_2-1}\lr(t)\left(1 - (1-\delta)^{t_2-t}\right)\xi_{t}
    \end{eqnarray}
    where the first  transition holds because we assume $t_2-t_1 \gg 1/\delta$, and thus contribution of terms
    up-to $t_1$ is negligible. 
    \end{proof}

\section{Additional Experiments}
    Comparison of a two-point average and an equivalent learning schedule is shown in \cref{fig:cifar10-100}.
    \begin{figure*}[t]
        \centering
        \begin{subfigure}{0.49\textwidth}
        \includegraphics[width=0.98\textwidth]{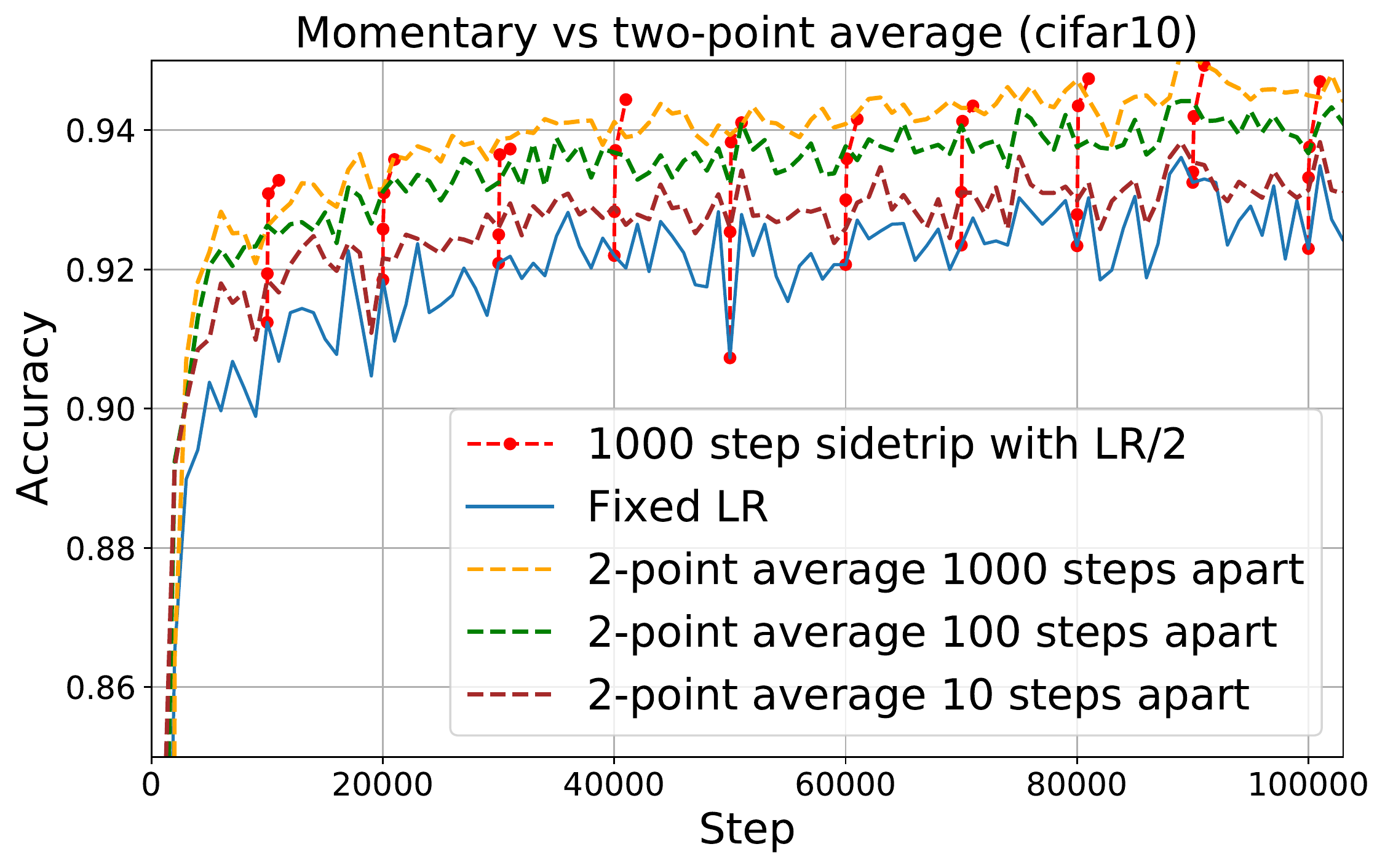}
        \caption{}
    
        \end{subfigure}
        \begin{subfigure}{0.49\textwidth}
        \includegraphics[width=0.98\textwidth]{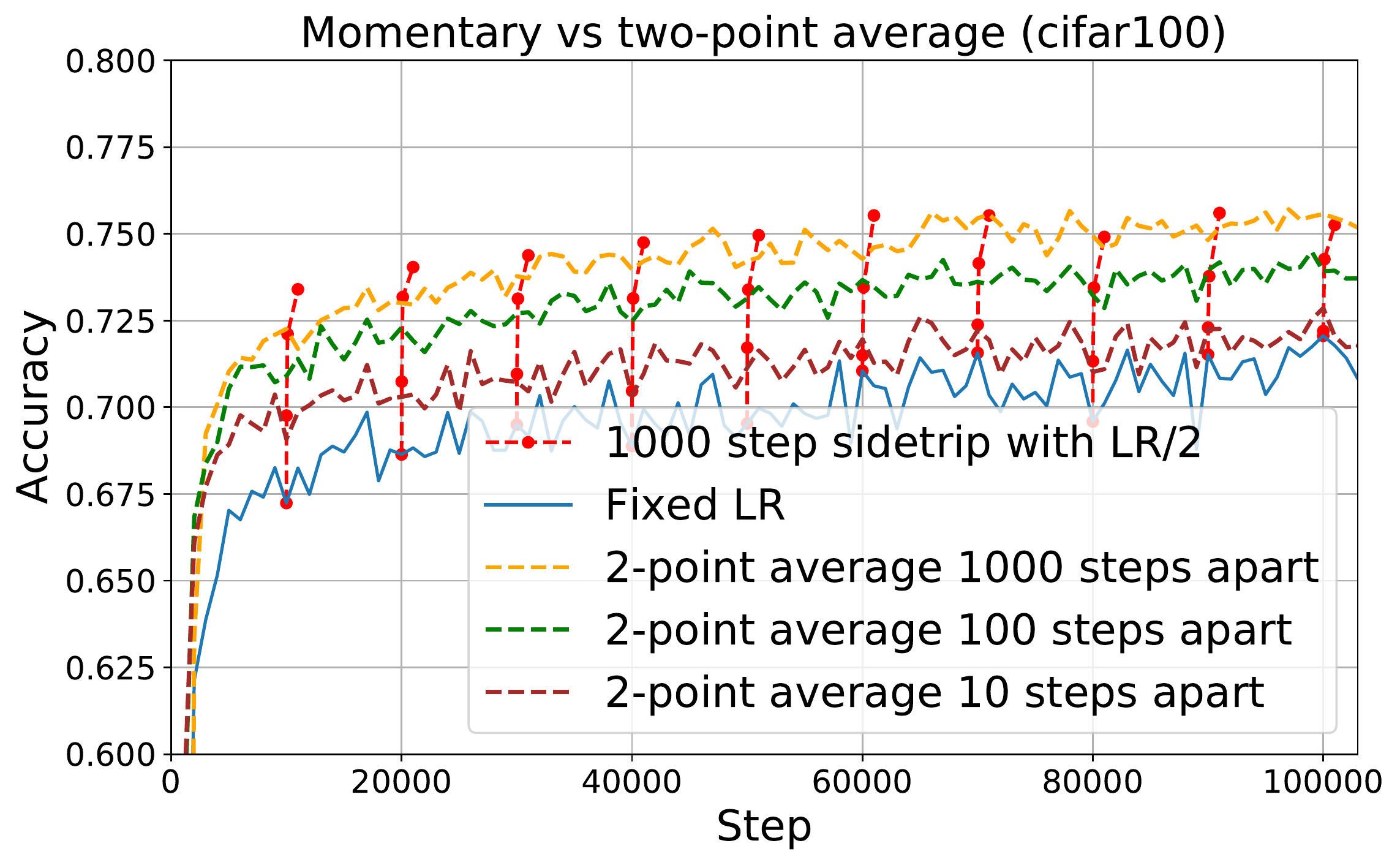}
        \caption{}
        \end{subfigure}
        \caption{Two-point averaging on validation data for CIFAR10 and CIFAR100}
        \label{fig:cifar10-100}
    \end{figure*}    
 \begin{figure*}[t]
    \centering
    \begin{subfigure}{0.4\textwidth}
    \includegraphics[width=0.98\textwidth]{figures/multiscale_cmp.pdf}
    \caption{}
    \label{fig:multiscale-a-app}
    \end{subfigure}
    \begin{subfigure}{0.4\textwidth}
    \includegraphics[width=0.98\textwidth]{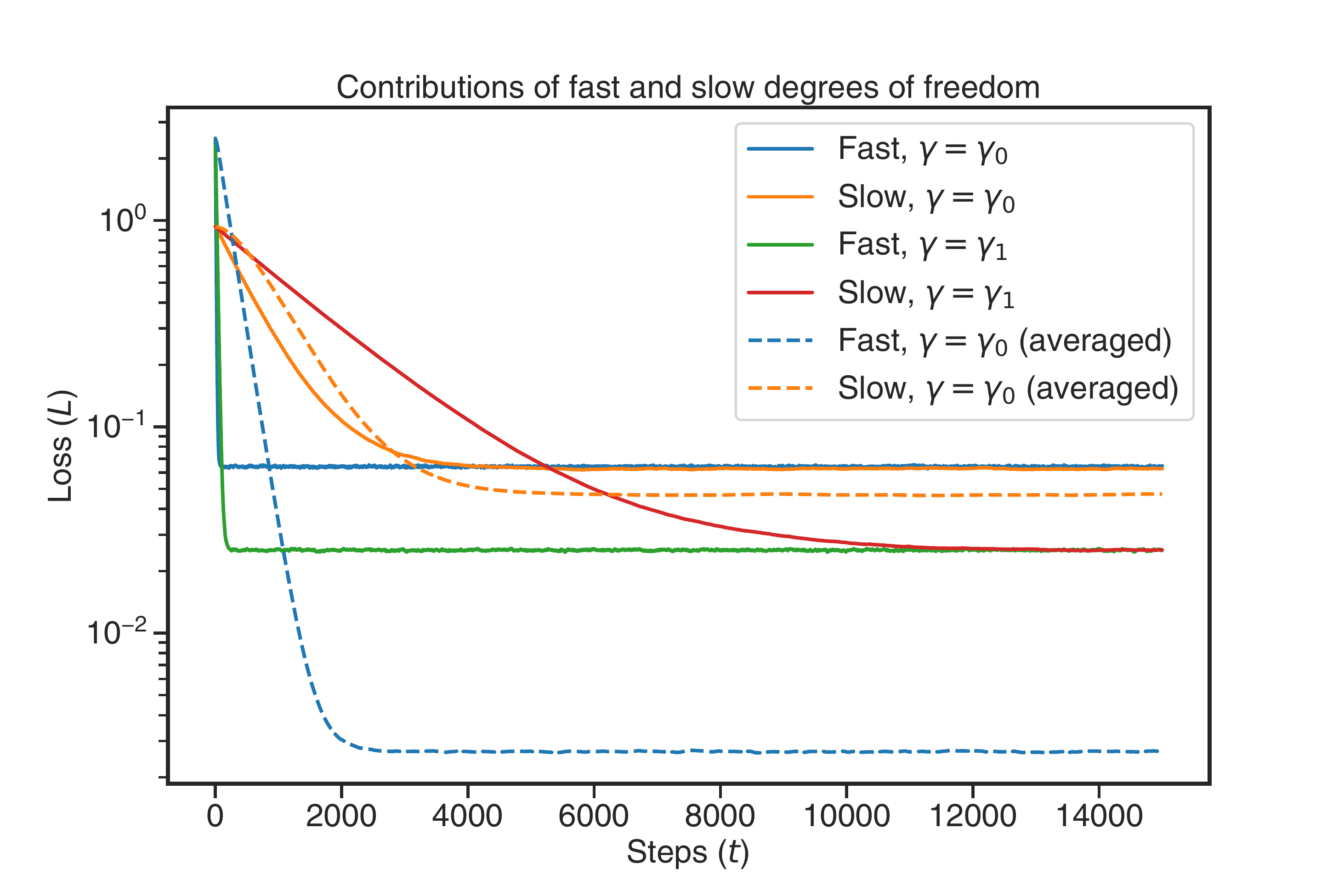}
    \caption{}
    \label{fig:multiscale-b-app}
    \end{subfigure}
    \caption{
        Loss dynamics in a system governed by with ``fast'' ($\Omega_{ii}=1$) and ``slow'' ($\Omega_{ii}=0.015$) degrees of freedom. (a) larger ($\lambda_0=5\cdot 10^{-2}$) and smaller ($\lambda_1 = 2\cdot 10^{-2}$) learning rates vs. Exponential Moving Average; (b) the effect of exponential moving averaging of the model weights for fast and slow degrees of freedom.
    }
    \label{fig:multiscale-app}
\end{figure*}

\subsection{Does single-step behavior findings generalize to different batch sizes, or at different trajectory}
In this section we show that our observations from the main experimental 
hold both at different stages of trajectory and for different batch sizes. On \cref{fig:single-step-batches} we show how the loss changes in a single step when performing this step at different location of SGD trajectory and using different batch size.  On \cref{fig:max-change} we
show the maximum loss that is achievable if we pick a step size that would pick ``optimal'' step. In other words the points correspond
to minima of each line of \cref{fig:single-step-batches}.

\begin{figure*}[t]
\centering
\label{fig:single-step-batches}
    \begin{subfigure}{0.4\textwidth}
    \includegraphics[width=0.98\textwidth]{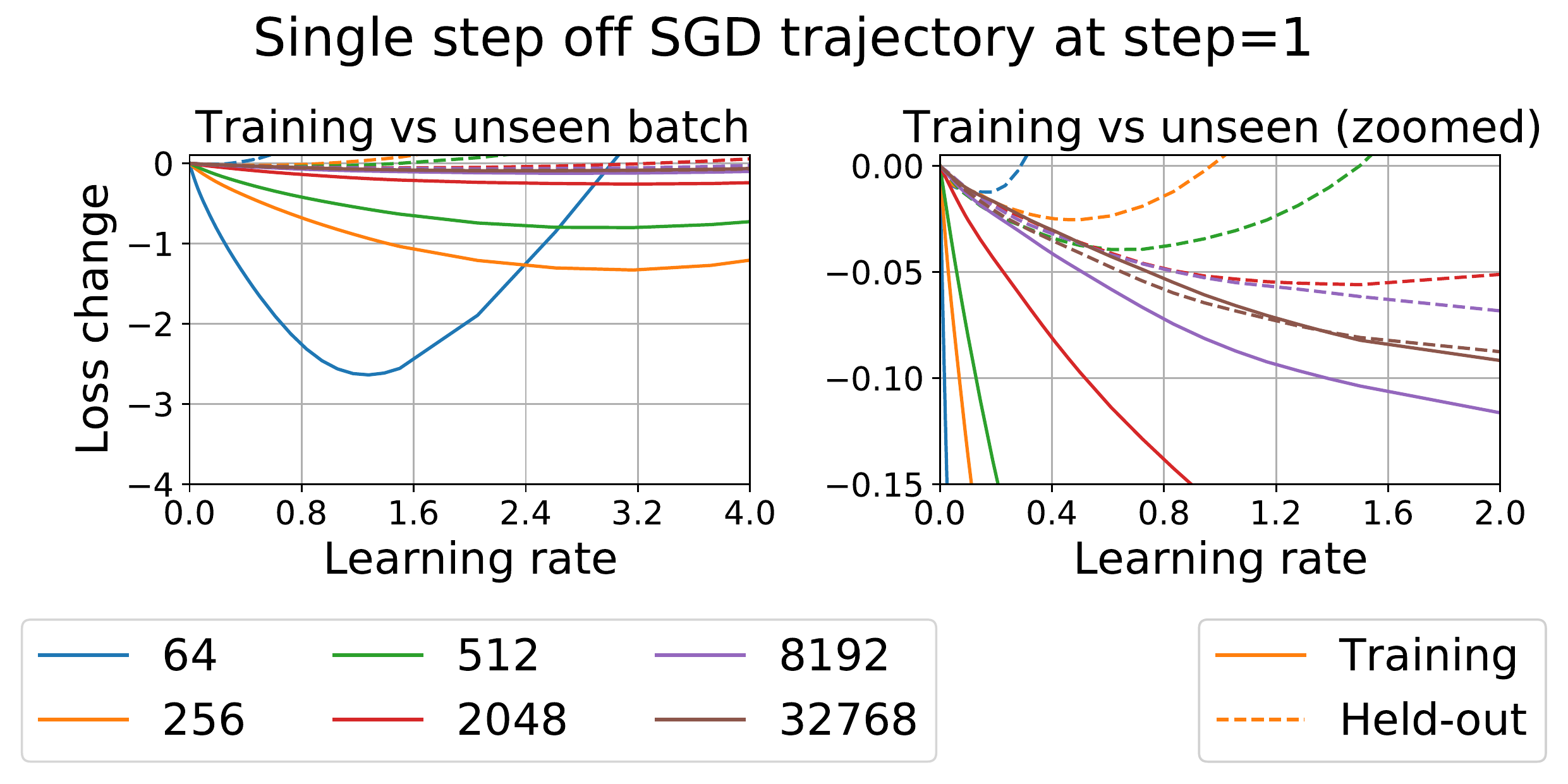}
    \caption{}
    \end{subfigure}
    \begin{subfigure}{0.4\textwidth}
    \includegraphics[width=0.98\textwidth]{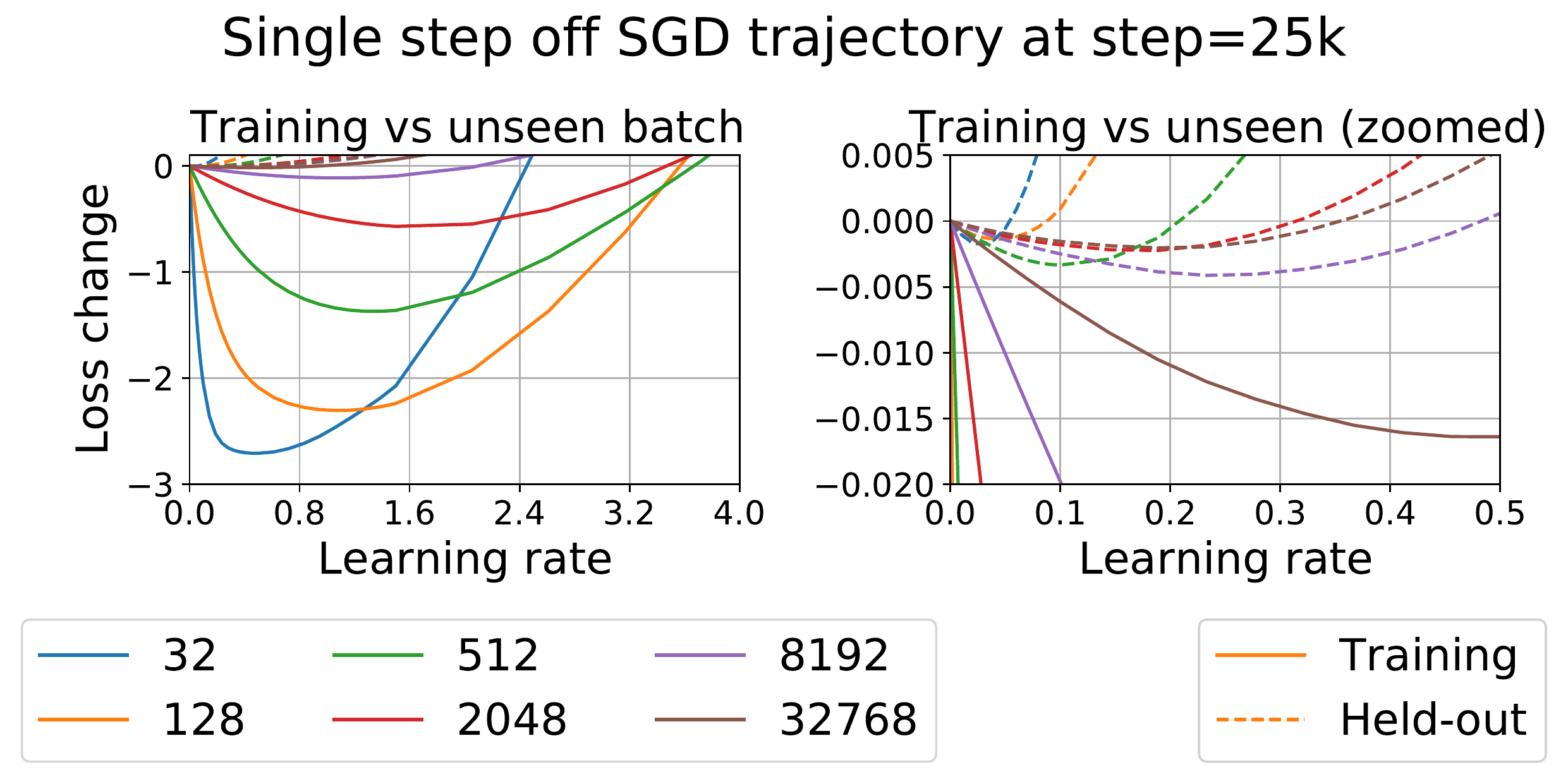}
    \caption{}
    \end{subfigure}    
    \begin{subfigure}{0.4\textwidth}
    \includegraphics[width=0.98\textwidth]{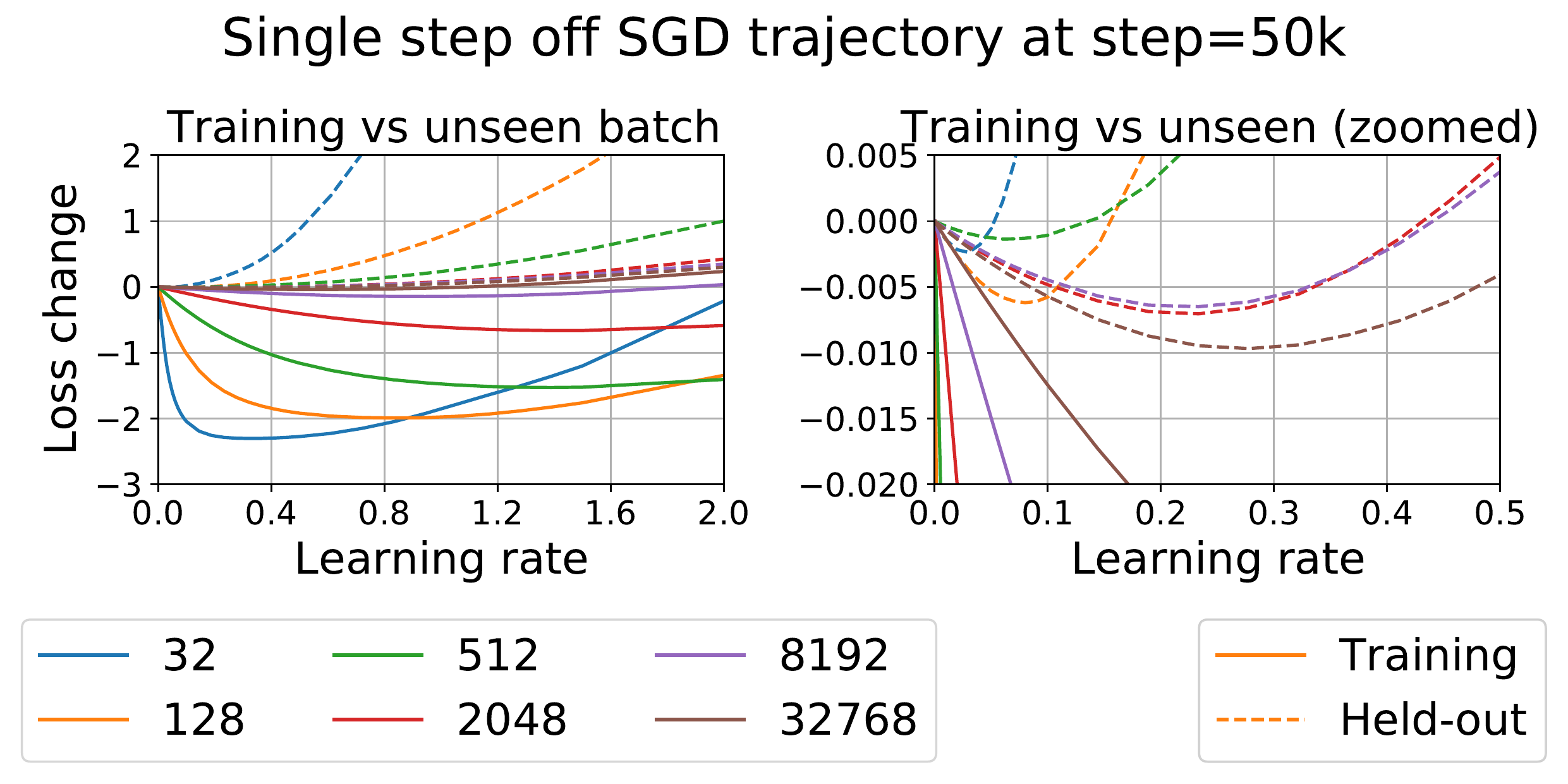}
    \caption{}
    \end{subfigure}
    \begin{subfigure}{0.4\textwidth}
    \includegraphics[width=0.98\textwidth]{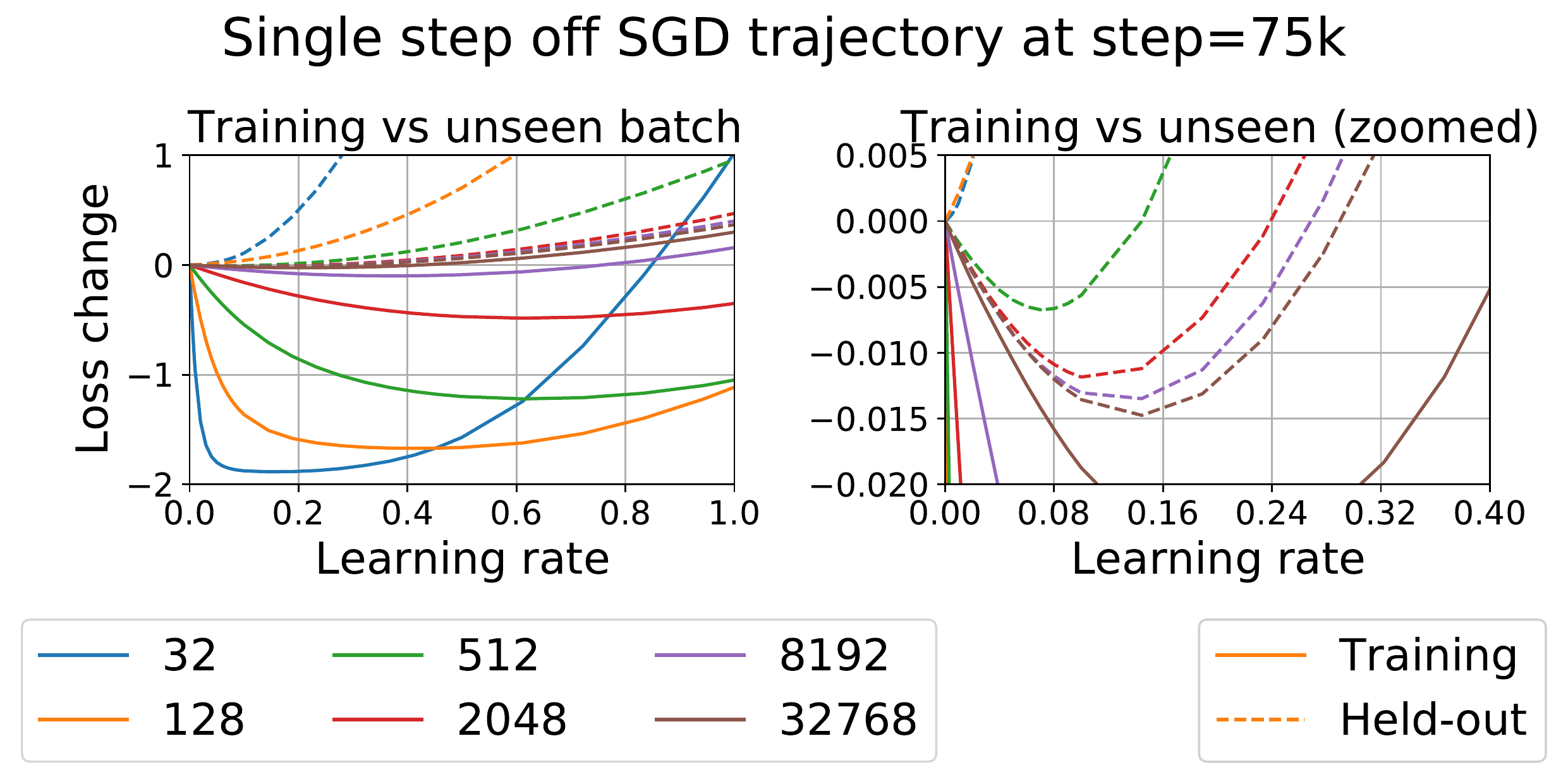}
    \caption{}
    \end{subfigure}
    \caption{Loss change as a function of step size starting off different points of SGD trajectory, and using different batches.}
\end{figure*}
 \begin{figure*}[t]
\centering
    \includegraphics[width=0.98\textwidth]{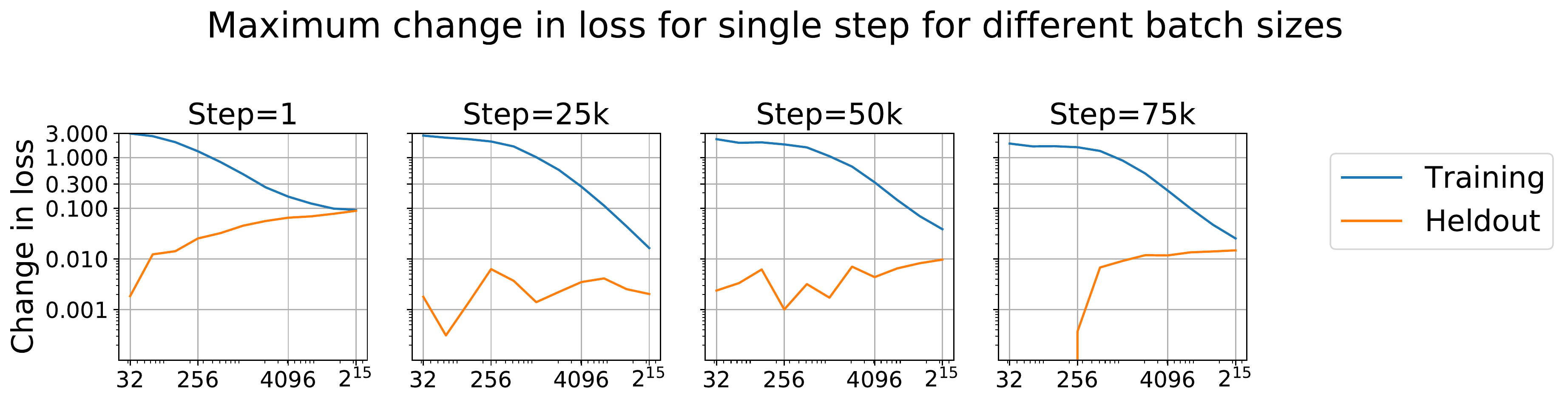}
\caption{Maximum loss drop  (log scale) for single step, when performed at different stages of SGD trajectory. As one can see even for batch size that are in the typical range (256-4096), training batch drops several orders of magnitude more than held out batch, however as the batch size reaches larger values such as 32768 the difference decreases. }    \label{fig:max-change}
\end{figure*}

\subsection{Additional experiments on Imagenet}
On \cref{fig:train-loss-const-cosine-lr} we show that sidetrip with appropriate learning rate schedule v.s. aggregation holds 
for actual loss and the accuracy, as well as for both training and test data. 
\begin{figure*}
\centering
\label{fig:train-loss-const-cosine-lr}
    \begin{subfigure}{0.98\textwidth}
    \includegraphics[width=0.98\textwidth]{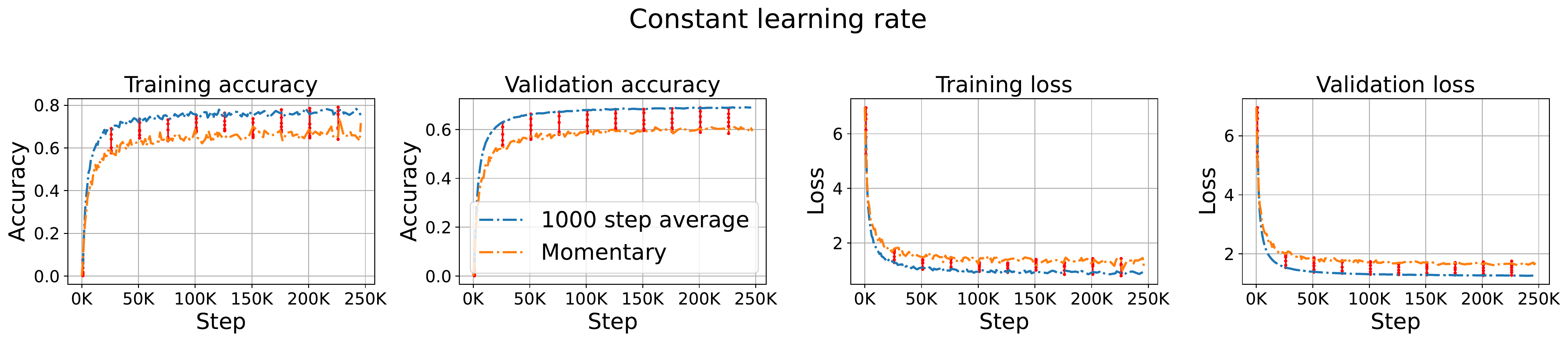}
    \end{subfigure}
\begin{subfigure}{0.98\textwidth}
    \includegraphics[width=0.98\textwidth]{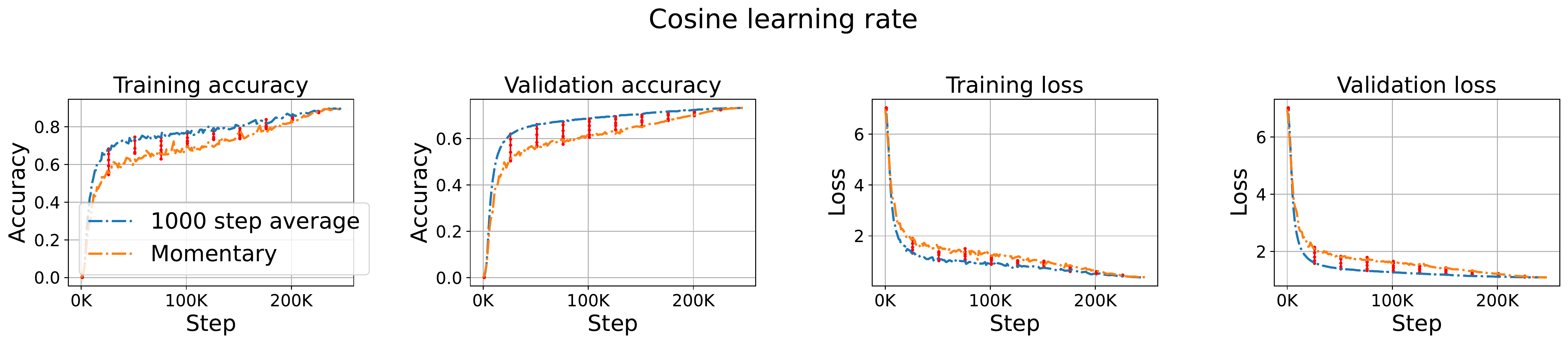}
    \end{subfigure}
\caption{Accuracy and loss for training and validation splits on Imagenet. The bottom row shows the trajectory for cosine learning rate schedule, which achieves state of the art accuracy for Resnet-34 architecture. Each graph shows momentary v.s. Stochastic Averaging (SWA) and side-trips (red) with appropriate learning rate schedule. }
\end{figure*}

\end{document}